\newcommand{\R}{\mathbb{R}}
\newcommand{\N}{\mathbb{N}}
\newcommand{\matrank}{\mathrm{rank}}
\newcommand{\diag}{\mathrm{diag}}
\newcommand{\onesvec}{\mathbf{1}}
\newcommand{\zeromat}{\mathbf{0}}
\newcommand{\eye}{\mathbf{I}}
\newcommand{\hadamard}{\circ}
\newcommand{\satrange}{r}
\newcommand{\singlepoint}{\bm{x}}
\newcommand{\fault}{f}
\newcommand{\faultmag}{\bar{f}}
\newcommand{\noise}{w}
\newcommand{\sqrtterm}{s}
\newcommand{\rrefleftover}{\Delta}
\newcommand{\PointMat}{\mathbf{X}}
\newcommand{\GCMat}{\mathbf{J}}
\newcommand{\EDM}{\mathbf{D}}
\newcommand{\EDMnoisy}{\Tilde{\mathbf{D}}}
\newcommand{\GCEDM}{\mathbf{G}}
\newcommand{\GCEDMnoisy}{\Tilde{\mathbf{G}}}
\newcommand{\FaultMat}{\mathbf{F}}
\newcommand{\SqrtMat}{\mathbf{S}}
\newcommand{\FaultMatSuper}{\FaultMat^{n, m}}
\newcommand{\SqrtMatSuper}{\SqrtMat^{n, d}}
\newcommand{\meanPoints}{\mu}
\newcommand{\teststatistic}{\gamma_{\text{test}}}
\newcommand{\teststatisticthreshold}{\bar{\gamma}_{\text{test}}}
\newtheorem{proposition}{Proposition}[section]
\newtheorem{corollary}{Corollary}[proposition]
\newcommand{\bhline}[1]{\noalign{\hrule height #1}}   
\newcolumntype{I}{!{\vrule width 1.5pt}}
\title{Autonomous Constellation Fault Monitoring with Inter-satellite Links: A Rigidity-Based Approach}
\author{
    Keidai~Iiyama, Daniel Neamati, and Grace~Gao \\%
    \vspace{1mm}
    \textit{Stanford~University}
    }
\begin{document}

\maketitle

\section*{biography}


\biography{Keidai Iiyama}{is a Ph.D. candidate in the Department of Aeronautics and Astronautics at Stanford University. He received his M.E. degree in Aerospace Engineering in 2021 from the University of Tokyo, where he also received his B.E. in 2019. His research interests include positioning, navigation, and timing of spacecraft and planetary rovers. }

\biography{Daniel Neamati}{is a Ph.D. candidate in the Department of Aeronautics and Astronautics at Stanford University. He received his bachelor's degree in Mechanical Engineering, with a minor in Planetary Science, from the California Institute of Technology. His research interests include GNSS, geospatial information, autonomous decision-making, and risk-aware localization.}

\biography{Grace Gao}{is an associate professor in the Department of Aeronautics and Astronautics at Stanford University.
Before joining Stanford University, she was an assistant professor at University of Illinois at Urbana-Champaign.
She obtained her Ph.D. degree at Stanford University.
Her research is on robust and secure positioning, navigation, and timing with applications to manned and unmanned aerial vehicles, autonomous driving cars, as well as space robotics.}

\section*{Abstract}
To address the need for robust positioning, navigation, and timing services in lunar environments, this paper proposes a novel fault detection framework for satellite constellations using inter-satellite ranging (ISR). 
Traditionally, navigation satellites can depend on a robust network of ground-based stations for fault monitoring. 
However, due to cost constraints, a comprehensive ground segment on the lunar surface is impractical for lunar constellations.
Our approach leverages vertex redundantly rigid graphs to detect faults without relying on precise ephemeris. 
We model satellite constellations as graphs where satellites are vertices and inter-satellite links are edges. 
We identify faults through the singular values of the geometric-centered Euclidean distance matrix (GCEDM) of 2-vertex redundantly rigid sub-graphs. 
The proposed method is validated through simulations of constellations around the Moon, demonstrating its effectiveness in various configurations. 
This research contributes to the reliable operation of satellite constellations for future lunar exploration missions.
\section{Introduction}
To meet the growing need for robust positioning, navigation, and timing (PNT) services at the lunar surface and lunar orbits, NASA and its international partners are collaborating to develop LunaNet~\citep{Israel2020}, a network of networks providing data relay, PNT, detection, and science services.
In LunaNet, these services are provided through cooperation among interoperable systems that evolve over time to meet the growing needs for these services, efficiently establishing a reliable, sustainable, and scalable network. 

It is crucial to monitor LunaNet navigation satellites for the reliable operation of safety-critical missions. 
The quality of lunar navigation signals can be compromised by various system faults, such as clock runoffs (e.g. phase and frequency jumps~\citep{weiss2010board}), unflagged maneuvers, failures in satellite payload signal generation components, and code-carrier incoherence~\citep{Walter2018}.
For terrestrial GNSS, the system fault alerts are provided as integrity information from GNSS augmentation systems, such as the satellite-based augmentation system (SBAS)~\citep{van2009gps}. 
There also exist algorithms to monitor and remove faults on the receiver side, known as receiver autonomous integrity monitoring (RAIM)~\citep{MISRA_1993}.
However, given the limited resources of lunar missions, it is preferable to perform fault monitoring within the navigation system. 
While the LunaNet Relay Service Documents (SRD) state that robustness of the signal should be a key consideration for LunaNet~\citep{nasa2022srd}, the specific methodology to monitor faults on LunaNet satellites is yet solidified.

Satellite fault monitoring in LunaNet can be challenging because there are no dedicated monitoring station networks on the Moon, especially in the early stage of operation. 
On Earth, SBAS monitors satellite faults by collecting GNSS signals at monitoring stations that are accurately located, and processes them in a central computing center where differential correction and integrity messages are calculated~\citep{van2009gps}. 
However, navigation satellite systems on the Moon or Mars will likely have none or very few monitoring stations on the planet's surface to monitor the signals, due to stringent cost constraints to deploy and maintain these stations. 
Therefore, onboard algorithms that can monitor satellite faults are desired for LunaNet.

One promising approach for autonomous satellite fault monitoring is to use inter-satellite ranging (ISR), a concept that has been investigated for terrestrial GNSS constellations~\citep{wolf_onboard_2000, rodriguez-perez_inter-satellite_2011}. 
The proposed autonomous satellite fault monitoring algorithms from previous works assume the availability of precise ephemeris to compute the estimated range between the satellites, which is used to compute the residuals over-bounded by Gaussian distributions. 
However, obtaining a precise ephemeris is challenging for future lunar and Martian constellations due to the limited number of monitoring stations and the lower stability of onboard clocks due to stringent cost requirements.
Moreover, if the ISR measurements are used for orbit determination and time synchronization (ODTS) to generate the ephemeris of the navigation satellites as proposed in the LunaNet concept, this would create a chicken-and-egg problem of ODTS and satellite fault detection.
The previously proposed algorithms also assume that the ISR measurements are sufficiently precise, well-calibrated, and contain no faults, which may be too strong of an assumption in practice. 

To tackle these problems, we propose a satellite fault detection framework that uses two-way ISR measurements and does not rely on ephemeris information.
In particular, out of the four faults previously mentioned, we 
target the clock frequency jump, which generates bias on the two-way ISR measurements. 
While two-way ISR measurements can cancel the time synchronization error between the two satellites, it is sensitive to the frequency difference of the two clocks \citep{Akawieh2016, rathje2024time}.

Our algorithm uses the ISR bias generated from faults to detect a deformation in the 3D rigid graph.
We start by modeling satellite constellations with ISR measurements as a graph, where satellites are modeled as vertices and links between satellites are modeled as edges with weights as their measured ranges. 
When the constellation size is sufficiently large and a meshed network within the constellation can be constructed, this graph contains multiple subgraphs of $k$ satellites that are fully connected, which are called $k$-cliques. 
$k$-cliques of $k \geq 5$ are known to be 2-vertex rigid in 3-dimensional space, which means they remain rigid (graphs cannot be susceptible to continuous flexing are called rigid graphs), after removing any vertex from the graph~\citep{alireza_motevallian_robustness_2015}.
We show that if a graph is 2-vertex rigid in 3-dimensional space, we can detect a fault satellite by checking if the given set of ranges is realizable in 3-dimensional space. 
Therefore, we can determine if there is a fault satellite within the nodes by checking if the given set of ranges is realizable in 3-dimensional space, for each of the $k$-clique subgraphs with $k \geq 5$.

Our proposed method monitors the singular values of the geometric-centered Euclidean Distance Matrix (GCEDM) \citep{dokmanic2015euclidean} constructed from the range measurements, to check if a given set of ranges are realizable in 3-dimensional space. 
This is inspired by the GNSS fault detection algorithm by \citep{Knowles2023}, where they propose to monitor the 4th and 5th singular value of the GCEDM, constructed from the observed range between the user and the GNSS satellite and the ephemeris of the GNSS satellites to identify GNSS satellites with faults. 
Using the fact that the 4th and 5th singular values increase when a fault is present, their method identifies the presence of a fault satellite whenever the sum of these two singular values goes over a certain threshold. In this paper, we prove why the 4th and 5th singular value increases under the presence of fault, and analyze the distribution of the singular values under the presence of noise in the range measurement.
Using GCEDM is promising for fault detection because it does not require solving for the user position to identify graph realizability (and faults) and resolves the aforementioned chicken-and-egg problem.



The contribution of this paper is summarized below. This work is based on our prior conference paper presented at the 2024 ION GNSS+~\citep{Iiyama2024Rigid}.
\begin{itemize}
    \item We propose a rigidity-based online fault detection framework for satellite constellations that do not require precise ephemeris or observations at the monitoring stations.
    \item We identify the required graph topology to identify fault satellites from a set of measured inter-satellite ranges. In particular, we show that the graph has to be 2-vertex redundantly rigid to detect fault satellites.
    \item We prove several key properties about the ranks of EDMs and GCEDMs to provide mathematical backing for fault detection using the fourth and fifth singular values of the GCEDM. 
    \item We validate our algorithm in a simulated constellation around the Moon. We show how the hyper-parameters and fault magnitudes affect the fault detection performance.
\end{itemize}

The paper is arranged as follows. Section \ref{sec:topology} discuss the required graph topology to detect satellite faults from observed ranges.
Section \ref{sec:edm_property} introduces the notations and proves some properties of the rank of the EDMs and GCEDMs. In section \ref{sec:algorithm}, the fault detection framework is proposed. In section \ref{sec:simulation}, performance evaluation for lunar constellations are provided. The paper concludes in section \ref{sec:conclusion}.

\section{Required Graph Topology for Fault Detection}
\label{sec:topology}
In this section, we identify the required graph topology for fault detection.

\subsection{Definitions}
\subsubsection{Range Measurements and Satellite Faults}
Consider a graph of $n$ nodes, which corresponds to satellites. 
Let $\singlepoint_i \in \R^d,  (i \in \{1, \ldots, n\})$ be the position of the $i$th node, and $\PointMat \in \R^{d \times n}$ be a matrix collecting these points. 
In this paper, we are interested in the 3D case, $d=3$.
By establishing two-way inter-satellite links, we obtain the range measurement $r_{ij}$ between the two satellites $i$ and $j$, 
\begin{equation}
    \satrange_{ij} = \begin{cases}
        \| \singlepoint_i - \singlepoint_j\| + \noise_{ij} + \fault_{ij} = \frac{c}{2} \tau_{ij} &  (i \neq j)  \\
        0   & (i = j)  
    \end{cases}
\end{equation}
Above, $w_{ij}$ is the range measurement noise,  $\fault_{ij}$ is the bias of the range measurements, $c$ is the light speed, and $\tau_{ij}$ is the (measured) two-way travel time.
The noise $w_{ij}$ includes thermal noise in the receiver, un-calibrated instrumental delay, and light-time delay estimation errors.
We assume the measurement noise $w_{ij}$ is sampled from a Gaussian distribution $\mathcal{N}(0, \sigma_w)$.
The bias term is determined as follows
\begin{align}
    \fault_{ij} &= \fault_i + \fault_j,  \\
    \fault_k &= \begin{cases}
    \faultmag  & \text{satellite k is fault}  \\
    0  & \text{satellite k is normal}
    \end{cases}, \quad k \in \{i, j\}
\end{align}
which means we assume that the bias is zero when no satellites are in fault status. In other words, we assume a constant bias $\faultmag$ gets inserted if either of the satellites in the edges is faulty.


\subsubsection{Weighed Graph and Embeddability}
Based on the graph and observed ranges, we define a weighted graph $G = \langle V, E, W \rangle$. A \textit{realization (embedding)} of $G$ is d-dimensional Euclidean distance space, $\R^d$, is defined as a function $f_r$ that maps $V$ into $\R^d$, such that for each edge $e=(v_i, v_j) \in E$, $W(e) = r_{ij}$ \citep{jian_beyond_2010}. 

A weighted graph $G = \langle V, E, W \rangle$ is called \textit{d-embeddable} when there is a realization (embedding) $f: V \rightarrow \R^{d} $ that maps the edges to points ($\singlepoint_1, \ldots, \singlepoint_n$) in $\R^d$ space.

A weighted graph is \textit{fault disprovable} if and only if the embeddability of $G$ implies that it contains no fault satellites.

\subsubsection{Rigid and vertex redundantly rigid graphs}
A graph is called \textit{rigid} if it has no continuous deformation other than rotation, translation, and reflection while preserving the relative distance constraints between the vertices~\citep{laman_graphs_1970}. Otherwise, it is called a \textit{flexible} graph. A graph is called \textit{k-vertex redundantly rigid} if it remains rigid after removing any $(k-1)$ vertices. Similarly, if it remains rigid after removing any $(k-1)$ edges, it is called \textit{k-edge redundantly rigid}.

\begin{figure}[h]
\centering
\begin{minipage}[b]{0.33\columnwidth}
    \centering
    \includegraphics[width=0.9\columnwidth]{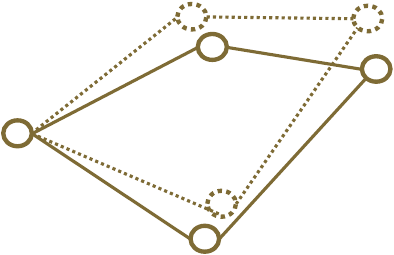}
    \subcaption{Flexible Graph}
    \label{fig:flexible_graph}
\end{minipage}
\begin{minipage}[b]{0.33\columnwidth}
    \centering
    \includegraphics[width=0.9\columnwidth]{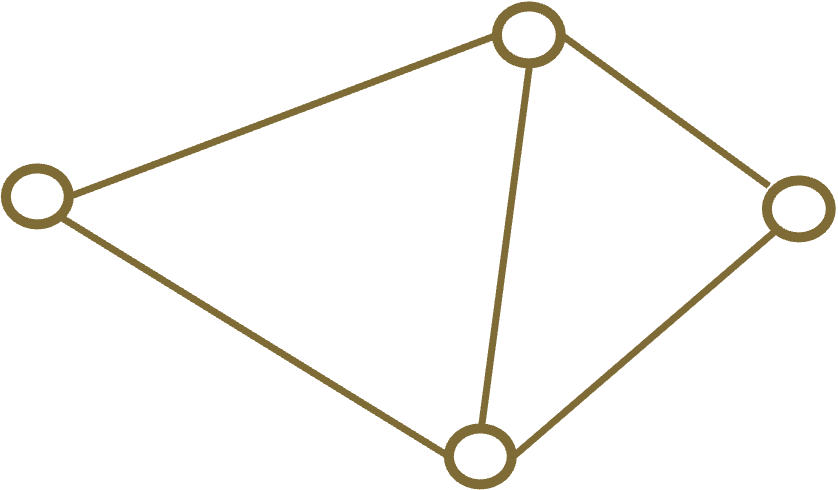}
    \subcaption{Rigid Graph}
    \label{fig:rigid_graphs}
\end{minipage}
\begin{minipage}[b]{0.33\columnwidth}
    \centering
    \includegraphics[width=0.9\columnwidth]{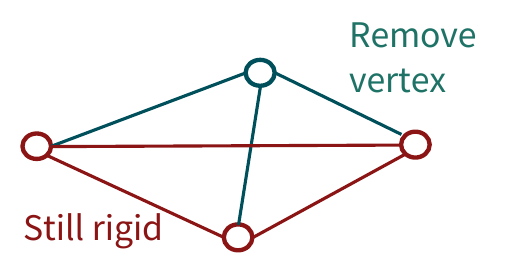}
    \subcaption{2-vertex redundantly rigid graph}
    \label{fig:red_rigid_graph}
\end{minipage}
\caption{Flexible, Rigid, and 2-vertex redundantly rigid graphs in $\R^2$}
\end{figure}

\subsection{Embeddability of Graphs with Fault Satellites}
In this section, we prove the sufficient and necessary condition for the graph to be fault disprovable in $\R^{d}$. 
In the following proofs, we assume that there is no noise $w_{ij}$ in the range measurements, and $\bar{f}$ is an arbitrary continuous random variable.

\subsubsection{Sufficient condition for fault disprovability}
\begin{proposition}
    \label{prop:fd_redudantly_rigd}
    Suppose there is a single fault satellite on a graph $G' = \langle V, E, W \rangle $ that is k-vertex ($k \geq 2$) redundantly rigid in $\R^{d}$. 
    If $G$ is $d$-embeddable, then $G$ contains no fault vertex (satellite) with probability 1.
\end{proposition}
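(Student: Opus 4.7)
The plan is to prove the contrapositive: assuming a fault with continuous magnitude $\faultmag$ is present at some vertex $v^*$, I will show that the probability (taken over $\faultmag$) that the biased graph $G$ is $d$-embeddable equals zero. The core idea is that the rigidity of the subgraph obtained by deleting $v^*$ pins down the non-fault satellite positions, which in turn over-constrains the fault vertex's position against the perturbed edge weights.

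The first step is to pass to $G - v^*$. Every edge of this subgraph avoids $v^*$, so each weight equals a true Euclidean distance; and by the 2-vertex redundant rigidity hypothesis, $G - v^*$ is itself rigid in $\R^d$. Hence its realization in $\R^d$ coincides up to rigid motion with the true configuration $\{\singlepoint_i\}_{i \neq v^*}$, and I may fix this realization without loss of generality. Embedding $G$ then reduces to finding a point $y \in \R^d$ that satisfies $\|y - \singlepoint_i\| = \|\singlepoint_{v^*} - \singlepoint_i\| + \faultmag$ for every neighbor $i$ of $v^*$. The second step is to show that $v^*$ has at least $d+1$ such neighbors: for any neighbor $u$ of $v^*$, the graph $G - u$ must be rigid, which generically forces every vertex, including $v^*$, to have degree at least $d$ in $G - u$, so $\deg_G(v^*) \geq d+1$.

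Pick $d+1$ neighbors $i_1, \dots, i_{d+1}$ of $v^*$. Squaring the distance equations and taking pairwise differences linearizes them, producing a system that generically determines $y$ as an affine function $y(\faultmag) = y_0 + \faultmag \, y_1$, where $y_0 = \singlepoint_{v^*}$. Substituting this parametrization into a remaining quadratic equation yields a polynomial relation $P(\faultmag) = 0$ of degree at most two. Since the true configuration solves the system at $\faultmag = 0$, zero is always a root of $P$, so at most one additional isolated root survives. Because $\faultmag$ is continuously distributed, the probability of hitting that remaining root is zero, and thus the biased graph fails to be $d$-embeddable almost surely, giving the desired contrapositive.

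The main obstacle lies in the degenerate cases: either the linear system used to solve for $y$ is rank-deficient (e.g., the $d+1$ chosen neighbors lie in an affinely dependent configuration), or the polynomial $P$ collapses to the identically zero polynomial, which would permit a one-parameter family of fault magnitudes to admit an embedding. I would handle the first case by choosing an affinely independent subset of neighbors, which is possible in generic constellations, and the second case by observing that $P \equiv 0$ imposes nontrivial algebraic constraints on the true positions that hold only on a measure-zero subset of configurations, and therefore do not affect the overall probability-one conclusion.
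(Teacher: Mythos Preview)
Your approach is largely sound and genuinely different from the paper's, but there is one real gap: you assert that because $G - v^*$ is rigid, ``its realization in $\R^d$ coincides up to rigid motion with the true configuration.'' Rigidity only guarantees \emph{finitely many} incongruent realizations, not a unique one; uniqueness up to congruence is global rigidity, which is strictly stronger and not implied by 2-vertex redundant rigidity. The fix is straightforward---run your over-determination argument separately for each of the finitely many realizations of $G - v^*$, obtain at most two admissible values of $\faultmag$ per realization from your quadratic $P$, and take the finite union---but as written the step is incorrect. The paper's proof does keep this distinction, explicitly speaking of ``finite discontinuous realizations up to congruence.''

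Apart from that, your route differs from the paper's. The paper never counts the degree of the fault vertex or solves explicitly for its position. Instead it removes a \emph{second} vertex $u$ (a non-fault neighbor of $v$) and uses the rigidity of $G^u = G \setminus \{u\}$, which still contains $v$, to argue that the realized distance between $u$ and $v$ can take only finitely many values; since the measured edge weight $W(e_{uv}) = \|\singlepoint_u - \singlepoint_v\| + \faultmag$ is continuous in $\faultmag$, it lands in that finite set with probability zero. This sidesteps your affine-independence and $P \equiv 0$ caveats, though at the price of a subtlety the paper glosses over (the realizations of $G^u$ themselves depend on $\faultmag$ through the biased edges incident to $v$). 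Your argument is more constructive and yields an explicit bound on the number of bad fault magnitudes, but carries the genericity assumptions you flagged.
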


\begin{proof}
    Assume $G$ is k-vertex ($k \geq 2$) redundantly rigid, $d$-embeddable, and contains a fault satellite. Let $v$ a fault satellite in $G$ and $u \neq v$ be another (normal) satellite in $G$. Because $G$ is $k$-vertex ($k \geq 2$) redundantly rigid, after removing $v$, it is still rigid, which means there are finite discontinuous realizations up to congruence for $G^v = G \setminus {v}$. Similarly, there are finite discontinuous realizations up to congruence for $G^u = G \setminus {u}$. In each realization of $G^v$ and $G^u$, the distance between node $v$ and node $u$ is fixed. So, in total, there are finite discrete values, denoted by set $S$, for the distance between $u$ and $v$. As $G$ is embeddable and $e_{uv} = (u, v) \in E$, the measured distance of $e_{uv}$ is in $S$. However, since $e_{uv}$ is an edge connected to a fault satellite, the measured distance of $e_{uv}$ is a continuous random variable, so with a probability of 0, $W(e) \in S$, since $S$ has measure 0. So with a probability of 1, there is no fault satellite in $G$.
\end{proof}

The visual description of the proof is shown in Figure \ref{fig:red_rigid}.

\begin{figure}[ht!]
    \centering
    \includegraphics[width=0.45\linewidth]{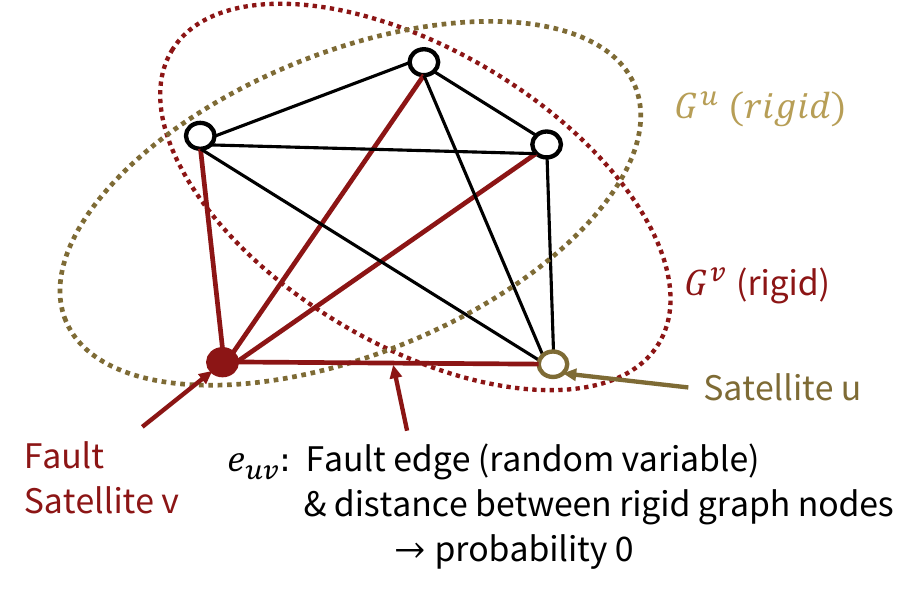}
    \caption{A visual description of the proof of Proposition \ref{prop:fd_redudantly_rigd} for $d=2$. }
    \label{fig:red_rigid}
\end{figure}

\begin{proposition}
    \label{prop:fd_red_rigid2}
    Given a weighted graph $G$, if $G$ is $k$-vetex redundantly rigid ($k \geq 2)$, then $G$ is fault disprovable.
\end{proposition}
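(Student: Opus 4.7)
My plan is to prove the contrapositive: if $G$ contains at least one fault satellite, then $G$ is $d$-embeddable with probability zero (over the random fault magnitude $\bar{f}$). Combined with the trivial zero-fault case, this establishes that $d$-embeddability implies the absence of fault satellites with probability $1$, i.e., fault disprovability. The plan splits naturally by the number $m$ of faults: the case $m = 0$ holds vacuously, the case $m = 1$ is exactly Proposition \ref{prop:fd_redudantly_rigd}, and the substantive content is the multi-fault case $m \geq 2$.

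For the multi-fault case I would extend the argument of Proposition \ref{prop:fd_redudantly_rigd}. Pick any fault satellite $v$ and any neighbor $u$ of $v$ in $E$; such a neighbor exists because $k$-vertex redundant rigidity with $k \geq 2$ forces every vertex to have degree at least $d + k - 1$. Let $c \in \{1, 2\}$ denote the bias coefficient of the edge $(u, v)$ (equal to $1$ if exactly one of $u, v$ is fault and $2$ if both are), so that $r_{uv}(\bar{f}) = \|\singlepoint_u - \singlepoint_v\| + c\bar{f}$ is a non-constant affine function of $\bar{f}$. Because $G$ is $k \geq 2$ vertex redundantly rigid, $G \setminus \{v\}$ is rigid, so its $d$-embedding is determined up to a finite set of congruences by the edge weights $\{r_{ij}\}_{i, j \neq v}$; these weights depend on $\bar{f}$ through the biases of the remaining fault satellites, so the embedded positions (and hence the induced distance between $u$ and the candidate location for $v$ that is fixed by the other edges incident to $v$ and the rigidity of $G$) are finitely many algebraic branches $s_i(\bar{f})$.

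Embeddability of $G$ then forces $r_{uv}(\bar{f}) = s_i(\bar{f})$ for some index $i$, i.e., $\|\singlepoint_u - \singlepoint_v\| + c\bar{f} = s_i(\bar{f})$. Each such equation has only finitely many solutions in $\bar{f}$ unless it holds as an identity in $\bar{f}$. The finite union of such solution sets has Lebesgue measure zero, so the continuous random variable $\bar{f}$ falls in it with probability zero, completing the proof.

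The main obstacle is rigorously excluding the degenerate case in which some branch $s_i(\bar{f})$ coincides with the specific affine function $\|\singlepoint_u - \singlepoint_v\| + c\bar{f}$ identically. I would address this by a genericity argument: applying the implicit function theorem to the rigidity constraints, the slope $\partial s_i / \partial \bar{f}$ at $\bar{f} = 0$ is determined by the bias pattern on the edges of $G \setminus \{v\}$ (which excludes $(u, v)$), whereas $c$ comes from the bias on $(u, v)$ alone, so generically they cannot be equal. A viable alternative is induction on $m$ with base case Proposition \ref{prop:fd_redudantly_rigd}, combined with a continuity argument that a positive-measure embeddability set for $m$ faults would force embeddability on a positive-measure set for some configuration with $m - 1$ faults, contradicting the inductive hypothesis.
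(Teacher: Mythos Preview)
The paper's own proof is far shorter than your proposal: it is two sentences. It first notes the trivial direction (if $G$ is not embeddable then there must be a fault, since the true positions always realize a fault-free graph), and then it simply invokes Proposition~\ref{prop:fd_redudantly_rigd} to conclude fault disprovability. In other words, in the paper this proposition is nothing more than a repackaging of Proposition~\ref{prop:fd_redudantly_rigd}'s conclusion in the language of ``fault disprovable.'' Your $m=0$ and $m=1$ cases together reproduce exactly this argument.

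Where you diverge is in treating the multi-fault case $m\geq 2$ as ``the substantive content.'' The paper does not attempt this: Proposition~\ref{prop:fd_redudantly_rigd} is explicitly stated under the hypothesis of a single fault satellite, and the paper's two-line proof of the present proposition inherits that restriction without further comment. Your extension---tracking how the finitely many rigid-embedding branches $s_i(\bar f)$ vary with $\bar f$ when other fault satellites remain in $G\setminus\{v\}$, and then trying to exclude the possibility that some branch coincides identically with the affine function $\|\singlepoint_u-\singlepoint_v\|+c\bar f$---is genuinely additional work that the paper never undertakes. That extension also carries the gap you yourself flag: the genericity step comparing $\partial s_i/\partial\bar f$ at $\bar f=0$ to the coefficient $c$ is only a heuristic (nothing in your sketch rules out accidental equality, and the branches $s_i$ need not be single-valued or differentiable near $\bar f=0$), and the induction alternative is not fleshed out. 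But for the purpose of matching the paper, only your $m\leq 1$ argument is relevant, and that part is correct and identical in spirit to what the paper does.
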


\begin{proof}
    We can also prove that for arbitrary graphs, if $G$ is not embeddable, then $G$ contains a fault satellite. Therefore, with \ref{prop:fd_redudantly_rigd}, if $G$ is $k$-vetex ($k \geq 2)$ redundantly rigid, then $G$ is fault disprovable.
\end{proof}

\subsubsection{Necessary condition for fault disprovability}
Next, we prove the necessary condition.
\begin{proposition}
    \label{prop:fd_not_redudantly_rigd}
    Given a weighted graph $G$, if $G$ is fault disprovable, then $G$ is $k$-edge redundantly rigid ($k \geq 2)$.
\end{proposition}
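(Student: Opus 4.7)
My plan is to establish the contrapositive: under the assumption that $G$ is not 2-edge redundantly rigid, I will construct a genuine satellite configuration together with a nonzero fault bias $\bar{f}$ whose induced biased measurement graph is still realizable in $\R^d$, which directly contradicts fault disprovability.

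By the failure of 2-edge redundant rigidity, there exists an edge $e^{\star} = (u,v) \in E$ for which $G - e^{\star}$ is not rigid. At a generic realization $g_0 : V \to \R^d$, non-rigidity of $G - e^{\star}$ supplies a continuous one-parameter flex $\{g_t\}$ that preserves every edge length of $G - e^{\star}$ while causing $\|g_t(u) - g_t(v)\|$ to sweep through an open interval as $t$ varies. I will treat $g_0$ as the true satellite configuration and interpret the flex as a witness that the true distance between $u$ and $v$ is not forced by the other measured distances.

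To convert this freedom into a fault, I will place a fault at $v$ with a small bias $\bar{f}$, so that every edge incident to $v$ (in particular $e^{\star}$) is inflated by $\bar{f}$ while the remaining edges are undisturbed. For embeddability, I plan to find a new position $q_v = g_v + \delta$ satisfying $\|q_v - g_i\| = \|g_v - g_i\| + \bar{f}$ for each neighbor $i$ of $v$, while potentially deforming the other vertices along the flex $\{g_t\}$. Linearizing around $\bar{f} = 0$ reduces the problem to a linear system $A\,\delta = \bar{f}\, b$, where $A$ has rows $(g_v - g_i)^{\top}$ and $b$ has entries $\|g_v - g_i\|$ for $i$ adjacent to $v$. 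If this linear system is consistent, the implicit function theorem extends a solution to a whole open interval of biases $\bar{f}$, yielding a concrete undetectable fault.

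I expect the main obstacle to be the high-degree case, where $v$ has more than $d$ neighbors and the sphere-intersection constraints form an overdetermined system. The non-rigidity of $G - e^{\star}$ should be essential here: its infinitesimal flex furnishes an extra degree of freedom in configuration space that allows the overdetermined linear system to remain compatible, via a joint perturbation of $v$ together with its neighbors along the flex direction. Once compatibility is established, an open set of admissible biases is produced, each giving biased measurements that remain $d$-embeddable, which contradicts fault disprovability and completes the contrapositive.
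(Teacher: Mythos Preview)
Your contrapositive set-up is fine, but the mechanism fails once $\deg(v) > d+1$. The flex $\{g_t\}$ of $G - e^{\star}$ preserves every edge length of $G - e^{\star}$, in particular $\|g_t(v) - g_t(i)\|$ for every neighbour $i \neq u$ of $v$. Hence moving along the flex contributes nothing toward increasing those $\deg(v)-1$ lengths by $\bar f$; the only leverage on them is the perturbation $\delta \in \R^d$ of $v$ itself. The linearized constraints $(g_v - g_i)^{\top}\delta = \bar f\,\|g_v - g_i\|$ for $i \neq u$ therefore form $\deg(v)-1$ equations in only $d$ unknowns, with the single flex parameter consumed by the one remaining edge $e^{\star}$. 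For a generic configuration the right-hand side does not lie in the $d$-dimensional column space of the coefficient matrix when $\deg(v)-1 > d$ (equivalently, the unit vectors from $v$ to its neighbours do not all sit on a common affine hyperplane $\{n : n^\top \delta = 1\}$), so the system is inconsistent and your implicit-function step cannot get started. One extra infinitesimal degree of freedom is simply not enough when the fault biases $\deg(v)$ edges simultaneously.

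The paper supplies the missing freedom by invoking, in addition to the edge-deletion flexibility you identified, the flexibility of $G^v = G \setminus \{v\}$: it strips \emph{all} edges incident to $v$ and re-attaches them one at a time as $\tilde G^i = G^v \cup e_1 \cup \cdots \cup e_i$. Each $\tilde G^i$ with $i < m$ is a spanning subgraph of the flexible $G \setminus \{e_m\}$ and hence itself flexible, so at every stage the next biased length $\|g_0(v) - g_0(u_{i+1})\| + \bar f$ lands in a set of achievable distances of positive measure. That inductive use of flexibility is what lets the argument absorb the bias on all $\deg(v)$ incident edges, not just on $e^{\star}$; your single-edge flex recovers only the last step of this induction.
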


\begin{proof}
    We will prove the contraposition of the proposition: \textit{If $G$ is not $k$-edge redundantly rigid ($k \geq 2)$, $G$ is not fault disprovable}.
    
    Assume $G$ is not  $k$-edge redundantly rigid and contains fault satellite. Then it is also not $k$-vertex redundantly rigid, because removing $k$ vertex results in removing at least $k$ edges.
    Therefore, there exists a vertex $v \in V$ where $G^v = G \setminus \{v\} $ is a flexible graph.
    Let U = \{$u_1, \cdots, u_m$\} a set of all nodes where $u_i \in G^v$ and $e_i = (u_i, v) \in E$. 
    In addition, since $G$ is not $k$-edge ($ k \geq 2$) redundantly rigid, we can select $v$ and $u_m$ so that subgraph $G^{e_m} = G \setminus \{(u_m, v)\}$ is flexible. In summary, we consider a (fault satellite) $v$ and another vertex $e_m$ that satisfies
    \begin{itemize}
        \item (A) $G^v = G \setminus \{v\} $ is flexible, and
        \item (B) $G^{e_m} = G \setminus \{(u_m, v)\}$ is flexible.
    \end{itemize}
    We prove that $G$ is embeddable by induction.
    
    i) First, we prove $G^v \cup e_1$ is embeddable, almost surely
    
    \textit{Proof}: It is proved that for almost all realizations of the flexible graph, the flexing space of $f_r$ contains a submanifold that is diffeomorphic to the circle~\citep{Hendrickson1992}.
    Therefore, since $G^v$ is flexible from assumption (A), the distance between node $u_1$ and $v$ will be a multivariate function for almost every point on this circle. 
    Hence, set of distances between $u_1$ and $v$ corresponding to $f_r$, denoted by $S_{d(u_i, v)}(r)$ has measure non-zero. 
    Thus, with probability greater than 0, $W(e_1) \in S_{d(u_1, v)}$, meaning $G^v \cup e_1$ is embeddable.

    ii) Next, we prove that if $\tilde{G}^i = G^v \cup e_1 \cup \cdots \cup e_i \ (1 \leq i \leq m-1)$ is embeddable, $\tilde{G}^{i+1} = G^v \cup e_1 \cup \ldots \cup e_i \cup e_{i+1}$ is embeddable, almost surely
    
    \textit{Proof}:  $\tilde{G}^{i}$ is flexible since it is a subgraph of a flexible graph $G^{e_m} = G \setminus \{(u_m, v)\}$ (from condition (B)). 
    Therefore, with probability greater than 0, $W(e_{i+1}) \in S_{d(u_{i+1}, v)}$.
    Hence, given $\tilde{G}^i$ is embeddable, $\tilde{G}^{i+1} = \tilde{G}^i \cup {e_{i+1}}$ is also embeddable.

From i) and ii), $G = \tilde{G}^m$ is embeddable. Therefore (non k-vertex rigid graph) $G$ contains fault and is embeddable, meaning it is not fault disprovable.
\end{proof}

The visual description of the proof for $\R^2$ is shown in Figure \ref{fig:not_red_rigid}.

\begin{figure}[ht!]
    \centering
    \includegraphics[width=0.8\linewidth]{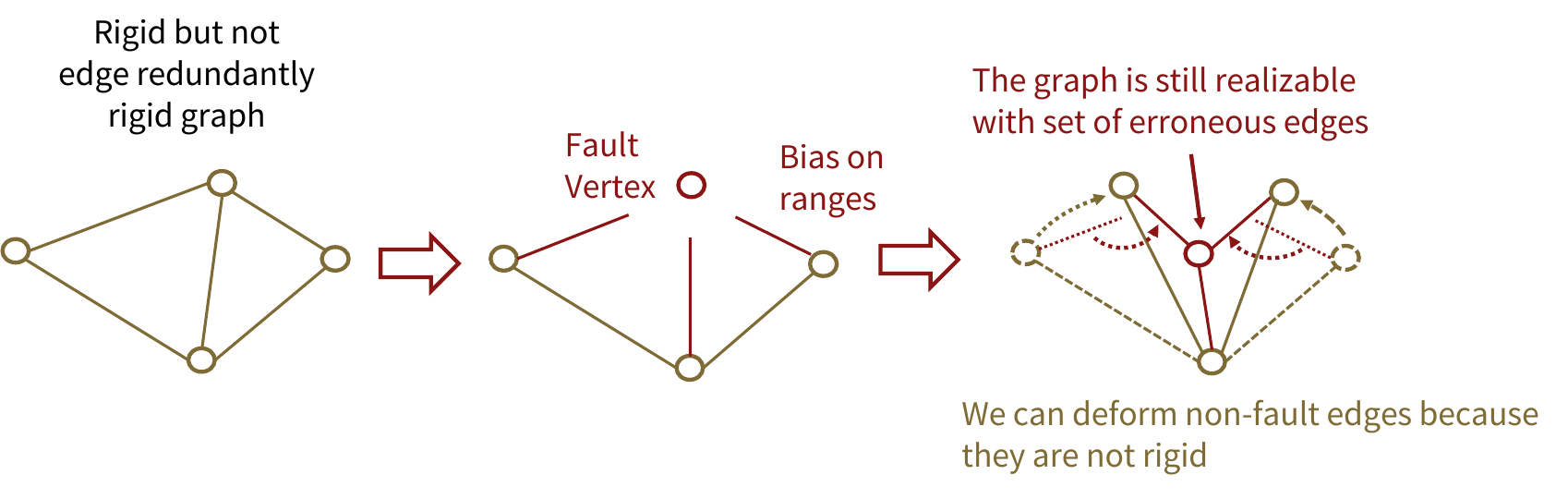}
    \caption{A visual description of the proof of Proposition \ref{prop:fd_not_redudantly_rigd} for $d=2$. We cannot detect faults by looking at the embeddability of graphs that are not 2-edge redundantly rigid. }
    \label{fig:not_red_rigid}
\end{figure}

\subsubsection{Summary}
From proposition \ref{prop:fd_red_rigid2} and \ref{prop:fd_not_redudantly_rigd}, we obtain the following proposition.

\begin{proposition}
    Given a weighted graph $G$, $G$ is fault disprovable if $G$ is k-vertex redundantly rigid ($k \geq 2)$, and only if $G$ is k-edge redundantly rigid ($k \geq 2)$.
\end{proposition}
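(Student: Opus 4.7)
The plan is to treat this final proposition as a direct packaging of the two immediately preceding results, so the proof is essentially a citation rather than a fresh argument. I would write it in two short parts, one for each implication in the statement.

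For the sufficiency direction (the "if" part), I would invoke Proposition \ref{prop:fd_red_rigid2} directly: assuming $G$ is $k$-vertex redundantly rigid for some $k \geq 2$, that proposition already concludes that $G$ is fault disprovable, so nothing further is needed. For the necessity direction (the "only if" part), I would invoke Proposition \ref{prop:fd_not_redudantly_rigd}, which is stated precisely in the form \emph{fault disprovable} $\Rightarrow$ \emph{$k$-edge redundantly rigid}. Combining these two yields the asymmetric biconditional-like statement in the final proposition.

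The one subtle point I would flag explicitly is the asymmetry between \emph{vertex} redundancy in the sufficient condition and \emph{edge} redundancy in the necessary condition. Since removing $k$ vertices also removes at least $k$ incident edges, $k$-vertex redundant rigidity implies $k$-edge redundant rigidity, but the converse fails in general; hence the two halves of the proposition do not collapse into a single characterization. I would note this gap briefly so the reader does not mistakenly read the statement as an "if and only if" on a single rigidity notion. The main obstacle is therefore not technical at all but expository: making sure the reader understands that the proposition is a clean summary rather than a strengthening of the preceding two results, and that closing the gap between the sufficient and necessary conditions would require a separate analysis outside the scope of this section.
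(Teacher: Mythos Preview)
Your proposal is correct and matches the paper's approach exactly: the paper itself does not give a standalone proof but simply states that the proposition follows from Propositions~\ref{prop:fd_red_rigid2} and~\ref{prop:fd_not_redudantly_rigd}. Your added remark on the vertex/edge asymmetry is a useful expository clarification that goes slightly beyond what the paper offers, but the core argument is identical.
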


\section{Properties of the Euclidean Distance Matrix}
\label{sec:edm_property}
In the previous section, we proved that by using vertex redundantly rigid graphs, we can detect faults by analyzing their embeddability.
Analyzing the embeddability of an arbitrary graph is known to be NP-hard ~\citep{Hendrickson1992}, but if the graph is fully connected, we can analyze them by analyzing the ranks of the GCEDM. Fully connected graphs of more than 5 nodes are 2-vertex redundantly rigid in $\R^3$.

In this section, we introduce and prove several properties of the EDM and GCEDM constructed from the observed ranges between the satellites. 
We show how we can use the singular values of the GCEDM to evaluate if a graph is embeddable in $\R^3$.
\subsection{Geometric Centered Euclidean Distance Matrix}

Consider a fully connected graph of $n$ nodes.
Using the set of observed ranges $\satrange_{ij}$, we construct an EDM, where its elements are equivalent to the square of the observed ranges ($\satrange_{ij}^2)$. Let  $\EDM^{n, d, m} \in \R^{n \times n}$ be an EDM without measurement noise ($\noise_{ij} = 0$) and $m$ nodes out of $n$ nodes being fault.
For example, a noiseless EDM with 6 satellites (in 3D space) with 2 fault satellites is denoted as $\EDM^{6, 3, 2}$.
Similarly, we define $\EDMnoisy^{n,d,m} \in \R^{n \times n}$ as EDM with measurement noise and $m$ nodes out of $n$ nodes having a fault.
A GCEDM $\GCEDM^{n,d,m}$ (or $\GCEDMnoisy^{n,d, m}$ for noisy EDM $\EDMnoisy^{n, d, m}$) is constructed from the EDM from the following operation
\begin{equation}
    \GCEDM^{n,d,m} = -\frac{1}{2} \GCMat^n \EDM^{n,d,m} \GCMat^n
\end{equation}
where $\GCMat^n$ is the geometric centering matrix as follows.
\begin{equation}
    \GCMat^{n} = \eye^n - \frac{1}{n} \onesvec {\onesvec}^{\top}
\end{equation}
where $\onesvec \in \R^n$ is the ones vector.
When no fault or noise is present $(m=0)$, the $\GCEDM^{n,d,0}$ will be positive semi-definite, and its rank will satisfy $\matrank(\GCEDM^{n,d,0}) = \matrank(\PointMat^{\top} \PointMat) \leq d$ \citep{dokmanic2015euclidean}.
\subsection{Rank of the Euclidean Distance Matrices with Fault Satellites} 
In \citep{Knowles2023}, they observed that there are more than three nonzero singular values when the GCEDM is constructed from an EDM with fault satellites. An example is provided in Figure \ref{fig:eigval_logplot}.
However, mathematical proofs were not provided in their paper. 
In this section, we provide and prove several properties of the EDM and GCEDM that are corrupted with faults and measurement noises. In the following proofs, we assume $\bar{f}$ is constant for all satellites.

\begin{figure}
    \centering
    \begin{subfigure}[b]{0.46\textwidth}
        \centering
        \includegraphics[width=\textwidth]{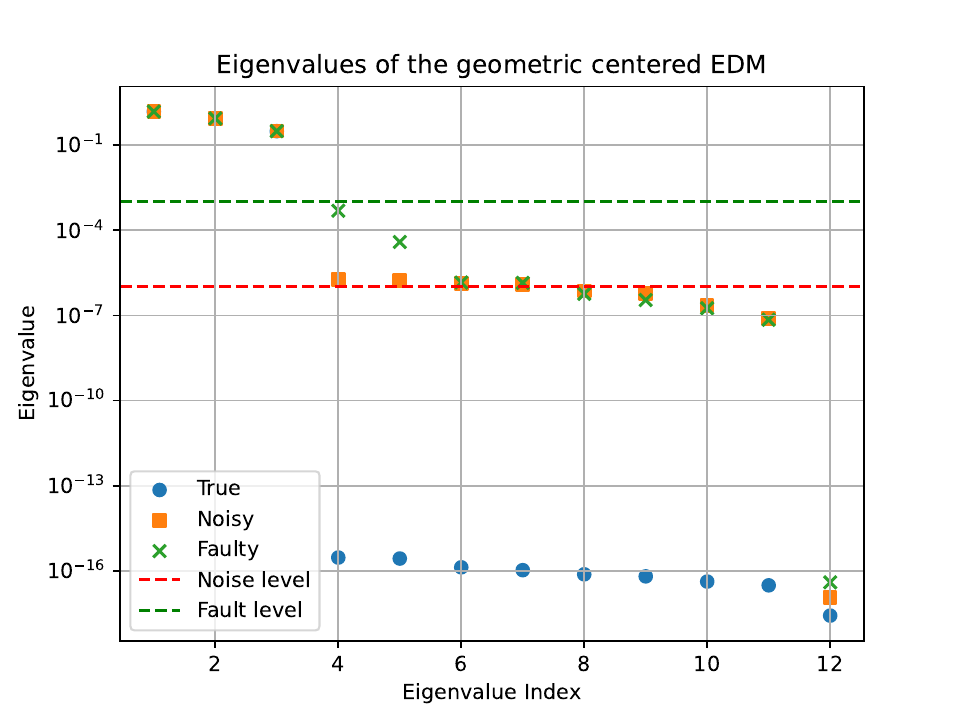}
        \subcaption{Singular values of the geometric centered EDM when 1 fault satellite exists. The fourth and fifth singular values increase compared to the non-fault case when the fault magnitude is sufficiently larger than the noise magnitude.}
    \end{subfigure}
    \hfill
    \begin{subfigure}[b]{0.46\textwidth}
        \centering
        \includegraphics[width=\textwidth]{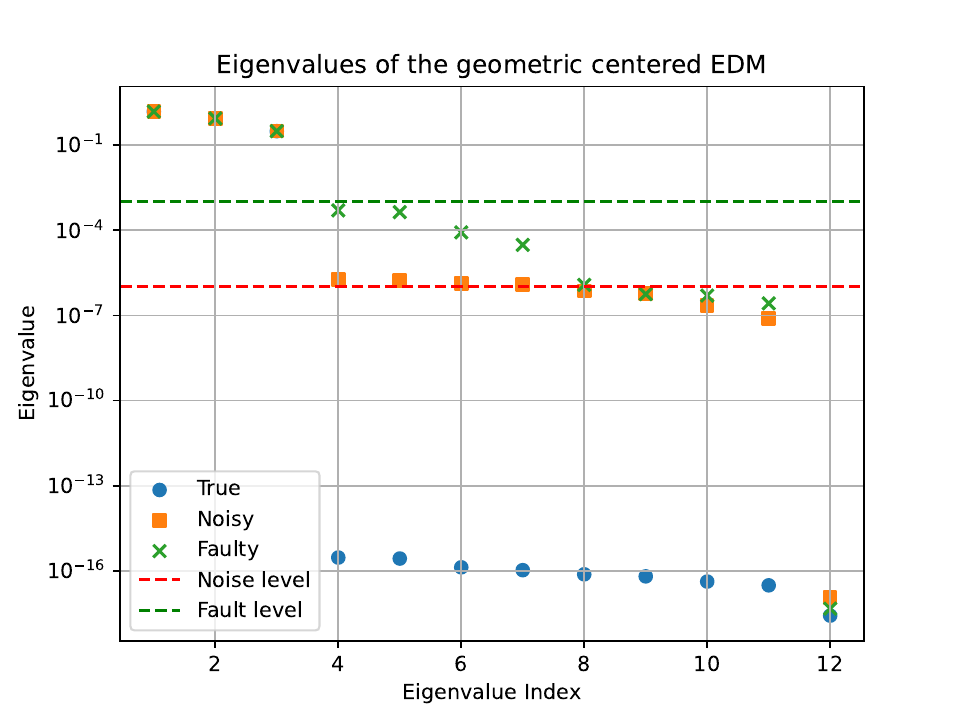}
        \subcaption{Singular values of the geometric centered EDM when 2 fault satellites. The fourth, fifth, sixth, and seventh singular values increase compared to the non-fault case when the fault magnitude is sufficiently larger than the noise magnitude.}
    \end{subfigure}
    \caption{The log plot of the singular values of an example geometric centered EDM for $n=12, d=3$. The singular values are ordered in the descending order of its magnitude. The blue points show the singular values when there is no fault or noise ($\GCEDM^{12, 3, 0}$), the orange points show the singular values when there is noise (of scale $\sigma_w = 10^{-6}$) added to the range measurements ($\GCEDMnoisy^{12, 3, 0}$), and the green points show the singular values when both noise and fault (of scale $\bar{b} = 10^{-3}$) is added ($\GCEDMnoisy^{12, 3, 2}$). The 12 points are identical for all cases, which are sampled randomly inside a unit cube. When noise is present, the singular values increase except the last singular value, which is close to zero (theoretically zero). When $m$ faults with magnitudes that are larger than the noise magnitude are added, the singular values up to the $d + 2m$ index increase compared to the non-fault case.}
    \label{fig:eigval_logplot}
\end{figure}

\begin{proposition}
    \label{prop:edmrank}
    The rank of the EDM $\EDM^{n,d,m}$ satisfies
    \begin{equation}
        \matrank(\EDM^{n, d, m}) \leq \min(d + 2 + 2m, n) 
    \end{equation}
\end{proposition}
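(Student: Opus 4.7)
The plan is to write the faulty EDM as $\EDM^{n,d,m} = \EDM^{n,d,0} + P$, where $P$ captures the perturbation introduced by the $m$ fault satellites, and to bound each summand's rank separately before applying subadditivity of rank. The trivial inequality $\matrank(\EDM^{n,d,m}) \leq n$ takes care of the second entry of the $\min$.

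For the base term I would invoke the standard EDM factorization
\begin{equation*}
\EDM^{n,d,0} \;=\; \onesvec\,\diag(\PointMat^\top \PointMat)^\top + \diag(\PointMat^\top \PointMat)\,\onesvec^\top - 2\,\PointMat^\top \PointMat,
\end{equation*}
in which the two outer-product terms each contribute rank one while $\PointMat^\top \PointMat$ has rank at most $d$, yielding $\matrank(\EDM^{n,d,0}) \leq d+2$ as noted in \citep{dokmanic2015euclidean}.

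For the perturbation $P$, the key observation is that $\satrange_{ij}$ deviates from $\|\singlepoint_i-\singlepoint_j\|$ only when $i$ or $j$ lies in the fault set $S$ of cardinality $m$; on diagonal entries both EDMs vanish. Hence $P_{ij}=0$ whenever both indices lie in $T := V \setminus S$. Permuting rows and columns so that $S$ is listed first, $P$ takes the block form
\begin{equation*}
P \;=\; \begin{pmatrix} P_{SS} & P_{ST} \\ P_{TS} & \zeromat \end{pmatrix}.
\end{equation*}
The top $m$ rows contribute at most rank $m$, and every row in the bottom block is supported on the $m$ columns indexed by $S$, so those rows also span a space of dimension at most $m$. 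Therefore $\matrank(P) \leq 2m$, and subadditivity delivers $\matrank(\EDM^{n,d,m}) \leq (d+2) + 2m$, completing the bound.

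I do not anticipate a significant obstacle. The main subtlety is to resist the temptation of expanding $\satrange_{ij}^2 = d_{ij}^2 + 2(f_i+f_j)d_{ij} + (f_i+f_j)^2$ and tallying ranks summand by summand, which produces a looser estimate than the $2m$ bound extracted directly from the sparsity pattern of $P$. Reasoning through the support of $P$ has the additional advantage of never invoking the assumption that the fault magnitude $\faultmag$ is constant, so the same bound applies to arbitrary bias vectors supported on $S$.
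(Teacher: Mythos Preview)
Your argument is correct. Both you and the paper exploit the same underlying fact---that the perturbation $P=\EDM^{n,d,m}-\EDM^{n,d,0}$ vanishes on the $T\times T$ block---but you reach the $\matrank(P)\le 2m$ bound by a direct block-sparsity argument, whereas the paper first expands $(\|\singlepoint_i-\singlepoint_j\|+\fault_{ij})^2$ to write $P=\FaultMatSuper\hadamard(\SqrtMatSuper+\FaultMatSuper)$, then analyzes the ranks of $\FaultMatSuper\hadamard\SqrtMatSuper$ and $\FaultMatSuper\hadamard\FaultMatSuper$ separately via explicit rank decompositions (illustrated on a three-fault example), and finally argues that the column space of the former already contains that of the latter so the sum still has rank at most $2m$. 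Your route is shorter and, as you observe, never uses the assumption that all faults share a common magnitude~$\faultmag$; the paper's route, on the other hand, makes the column structure of the perturbation explicit, which it later reuses to discuss why the bound is typically attained with equality and to motivate looking at the fourth singular vector $\bm{u}_4$ for fault identification.
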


\begin{proof}
    See Appendix~\ref{sec:app1}.
\end{proof}

\begin{proposition}
    \label{prop:geocenteredm}
    The rank of the GCEDM $\GCEDM^{n,d,m} = -\frac{1}{2} J^n \EDM^{n, d, m} J^n$  satisfies
    \begin{equation}
        \matrank(\GCEDM^{n,d,m}) \leq \min(d + 2m, n - 1) 
    \end{equation}
\end{proposition}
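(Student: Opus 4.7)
The plan is to sharpen Proposition~\ref{prop:edmrank} by exploiting the defining property $\GCMat^n \onesvec = \zeromat$: the ``$+2$'' in the $d+2+2m$ bound for $\matrank(\EDM^{n,d,m})$ comes from two rank-one terms aligned with $\onesvec$, and those get annihilated by double-centering with $\GCMat^n$, leaving only the geometric and fault-induced components. The $n-1$ half of the bound, meanwhile, is just the rank of the centering matrix itself and follows from sub-multiplicativity of rank alone.

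First I would isolate the ones-vector structure explicitly. Expanding
\begin{equation}
    r_{ij}^2 = \|\singlepoint_i\|^2 + \|\singlepoint_j\|^2 - 2\,\singlepoint_i^\top\singlepoint_j + (\text{fault contributions}),
\end{equation}
the EDM splits as
\begin{equation}
    \EDM^{n,d,m} = \onesvec \alpha^\top + \alpha \onesvec^\top + R,
\end{equation}
with $\alpha_i = \|\singlepoint_i\|^2$ and a residual $R = -2\,\PointMat^\top \PointMat + (\text{fault contribution})$ that inherits $\matrank(R) \leq d + 2m$ from the argument underlying Proposition~\ref{prop:edmrank}. Applying $\GCMat^n$ on both sides kills the two ones-vector terms, so
\begin{equation}
    \GCEDM^{n,d,m} = -\tfrac{1}{2}\, \GCMat^n R\, \GCMat^n,
\end{equation}
and sub-multiplicativity gives $\matrank(\GCEDM^{n,d,m}) \leq \min(\matrank(R),\,\matrank(\GCMat^n)) \leq \min(d+2m,\, n-1)$, which is the claim.

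The main obstacle I anticipate is the bookkeeping for the fault contribution to $R$. The expansion of $r_{ij}^2$ introduces a Hadamard-type cross term $2\|\singlepoint_i - \singlepoint_j\|(\fault_i + \fault_j)$ and a square term $(\fault_i+\fault_j)^2$, neither of which is obviously rank-$2m$ on its own. The cleanest route is to note that both terms vanish outside the at most $2m$ rows and columns indexed by faulty satellites, so they together occupy a row-and-column support of size $2m$, contributing at most $2m$ to the rank after symmetrization; under the uniform-magnitude assumption $\fault_i \in \{0, \faultmag\}$ the square term further collapses because $\mathbf{f}^{\odot 2} = \faultmag\,\mathbf{f}$. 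Once this support-based rank accounting is formalized consistently with Proposition~\ref{prop:edmrank}, the remainder of the argument is just the two one-line facts $\GCMat^n \onesvec = \zeromat$ and $\matrank(ABC) \leq \min(\matrank(A),\matrank(B),\matrank(C))$.
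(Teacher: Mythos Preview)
Your proposal is correct and follows essentially the same approach as the paper: decompose $\EDM^{n,d,m}$ into the two rank-one $\onesvec$-terms, the Gram matrix $-2\PointMat^\top\PointMat$, and the fault contribution $\FaultMatSuper \hadamard (\SqrtMatSuper + \FaultMatSuper)$, use $\GCMat^n\onesvec = \zeromat$ to annihilate the first two, and then bound the remaining rank by $d+2m$ together with $\matrank(\GCMat^n)=n-1$. The only cosmetic difference is that the paper bounds the centered Gram and fault pieces separately (each $\leq d$ and $\leq 2m$) and adds, whereas you bundle them into a single residual $R$ before centering; your row/column-support argument for $\matrank(\text{fault term})\leq 2m$ is a valid alternative to the paper's explicit rank-decomposition.
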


\begin{proof}
    See Appendix~\ref{sec:app2}.
\end{proof}

In practice, we observe that the relation in Proposition~\ref{prop:geocenteredm} holds with equality except for degenerate cases, such as those noted in Proposition~\ref{prop:edmrank}.
Specifically, the double centering will remove the sparsity pattern 
observed in Proposition~\ref{prop:edmrank}, but it will not change the rank if $2m < n - 1$.
Moreover, although $\GCEDM^{n,d,0}$ is positive semi-definite \citep{dokmanic2015euclidean}, $\GCEDM^{n,d,m}$ is not guaranteed to be positive semi-definite and we observe both positive and negative eigenvalues in practice.
\begin{corollary}
\label{theorem:noisy}
    The GCEDM $\GCEDMnoisy^{n, d, m} = -\frac{1}{2} J_n \EDMnoisy^{n, d, m} J_n$ constructed from an EDM with the edges corrupted by Gaussian noise almost surely has rank
    \begin{equation}
        \matrank(\GCEDMnoisy^{n,d,m}) = n - 1 
    \end{equation}  
\end{corollary}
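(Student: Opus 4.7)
My plan is to separate the argument into a deterministic upper bound of $n-1$ and an almost-sure matching lower bound. The upper bound is immediate: the centering matrix $\GCMat^n = \eye^n - \frac{1}{n}\onesvec\onesvec^\top$ is the orthogonal projector onto $\onesvec^\perp$ and therefore has rank $n-1$. Since $\GCEDMnoisy^{n,d,m} = -\tfrac{1}{2}\GCMat^n\EDMnoisy^{n,d,m}\GCMat^n$ begins and ends with $\GCMat^n$, submultiplicativity of rank forces $\matrank(\GCEDMnoisy^{n,d,m}) \leq n-1$ with no assumption on the noise, the faults, or the ambient dimension $d$.

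For the matching lower bound, I would work in a reduced basis. Let $U \in \R^{n\times(n-1)}$ have orthonormal columns spanning $\onesvec^\perp$, so that $\GCMat^n = UU^\top$ and $U^\top U = \eye^{n-1}$. Then
\begin{equation}
\GCEDMnoisy^{n,d,m} = -\tfrac{1}{2}\, U \bigl(U^\top \EDMnoisy^{n,d,m}\, U\bigr)\, U^\top,
\end{equation}
and since $U$ has full column rank, $\matrank(\GCEDMnoisy^{n,d,m}) = \matrank\bigl(U^\top \EDMnoisy^{n,d,m}\, U\bigr)$. It therefore suffices to show that the $(n-1)\times(n-1)$ symmetric matrix $M(w) := U^\top \EDMnoisy^{n,d,m}(w)\, U$ is almost surely nonsingular, where $w = (\noise_{ij})_{i<j}$ collects the Gaussian noise entries.

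Each off-diagonal entry of $\EDMnoisy^{n,d,m}$ is the polynomial $(\|\singlepoint_i - \singlepoint_j\| + \noise_{ij} + \fault_{ij})^2$ in $w$, while the diagonal is zero, so $p(w) := \det M(w)$ is a polynomial in $w$. The crux of the proof is to show that $p$ is not the zero polynomial. The cleanest route is to exhibit a single realization $w^\star$ at which $\EDMnoisy^{n,d,m}(w^\star)$ equals a valid EDM of $n$ points in general position in $\R^{n-1}$; by the classical identification of the GCEDM with the Gram matrix of the centered configuration, such a point set gives a GCEDM of rank exactly $n-1$, which via the reduction above makes $M(w^\star)$ invertible. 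Feasibility of the target is clear because each $\noise_{ij}$ ranges over all of $\R$ independently, so each off-diagonal entry of $\EDMnoisy^{n,d,m}$ can be made to attain any prescribed nonnegative squared distance. This non-vanishing step is the main obstacle: the noise enters through a square, so some care is needed to confirm the image of $w\mapsto\EDMnoisy^{n,d,m}(w)$ is rich enough to reach a general-position EDM.

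Once $p\not\equiv 0$ is established, the zero set $\{w : p(w) = 0\}$ is a proper algebraic variety in $\R^{n(n-1)/2}$ and therefore has Lebesgue measure zero. Because $w$ has a joint Gaussian density that is absolutely continuous with respect to Lebesgue measure, $\Pr\{p(w)=0\}=0$, so $M(w)$ is almost surely nonsingular and $\matrank(\GCEDMnoisy^{n,d,m}) = n-1$ almost surely, matching the deterministic upper bound.
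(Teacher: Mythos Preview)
Your proposal is correct, but your route to the lower bound differs substantially from the paper's. The paper argues by analogy: Gaussian noise on every edge is treated as a fault on every satellite, so the noisy GCEDM inherits the rank behavior of $\GCEDM^{n,d,n}$; invoking Proposition~\ref{prop:geocenteredm} then gives $\matrank(\GCEDMnoisy^{n,d,m}) \leq \min(d+2n,\,n-1) = n-1$, and equality is asserted via the informal observation (stated just after Proposition~\ref{prop:geocenteredm}) that the bound is tight outside degenerate sets of measure zero. In contrast, you bypass Propositions~\ref{prop:edmrank} and \ref{prop:geocenteredm} entirely: after compressing by an orthonormal basis $U$ for $\onesvec^\perp$, you show $\det(U^\top \EDMnoisy\, U)$ is a nonzero polynomial in the noise variables by exhibiting a single $w^\star$ that lands on the EDM of an affinely independent $(n-1)$-dimensional configuration, and then conclude by the standard measure-zero property of proper algebraic varieties. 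Your argument is more self-contained and makes the ``almost surely'' clause fully rigorous without leaning on the non-degeneracy heuristics surrounding Proposition~\ref{prop:geocenteredm}; the paper's version is shorter and emphasizes the structural link between noise and faults, but its equality step is only sketched.
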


\begin{proof}
    See Appendix~\ref{sec:app3}.
\end{proof}


\subsection{Distribution of the Singular Values of the Geometric Centered EDM}
\label{sec:distribution_singular}

The propositions proved in the previous sections indicate that we can detect fault satellites by observing the increase in the 4th and 5th singular values of the GCEDMs constructed from the ISR measurements. 
Following the work by ~\citep{knowles2024greedy}, we use the following test statistics $\teststatistic$ to monitor if a fault satellite exists in a given graph.
\begin{equation}
    \teststatistic = \frac{\lambda_4 + \lambda_5}{\lambda_1}
\end{equation}
where $\lambda_i$ is the $i$th singular value of the geometric centered EDM.
If the ISR measurements are completely noiseless, we can detect if a fault satellite exists within the graph by checking if $\teststatistic$ of the GCEDMs is not 0, since GCEDMs have rank 3 when no fault exists. 
However, when the ISR measurements are noisy, the 4th and 5th singular values increase regardless of faults, as shown in Figure \ref{fig:eigval_logplot} and Corollary \ref{theorem:noisy}. 
Therefore, to detect faults under the presence of faults, we need to 
set a threshold based on the singular value distributions of the noisy (but non-fault) GCEDMs that can separate the fault from the noise.

The distribution of the singular values of the three different patterns of noisy GCEDMs is shown in Figure 
\ref{fig:singular_value_distribution}. 
As illustrated in Figure \ref{fig:singular_value_distribution}, the distribution of $\teststatistic$ is affected by the relative geometry of the satellites. When the satellites are spread in 3D space, the distribution of $\teststatistic$ shifts to the right as the noise magnitude increases, as shown in Figure \ref{fig:topology1} and \ref{fig:topology2}. In both non-fault and fault cases, the distribution can be well approximated by a gamma distribution. 
It is worth mentioning that for some geometries, it is not possible to distinguish fault by looking at the singular values because the distribution does not change in the presence of fault, as shown in Figure \ref{fig:topology3}. 
This corresponds to the case where the non-fault satellites are in the same plane. 
This observation indicates that it is better to have satellites on diverse orbit planes to effectively detect faults.

\begin{figure}[htb!]
\centering
\begin{subfigure}[b]{0.31\textwidth}
  \includegraphics[height=50mm]{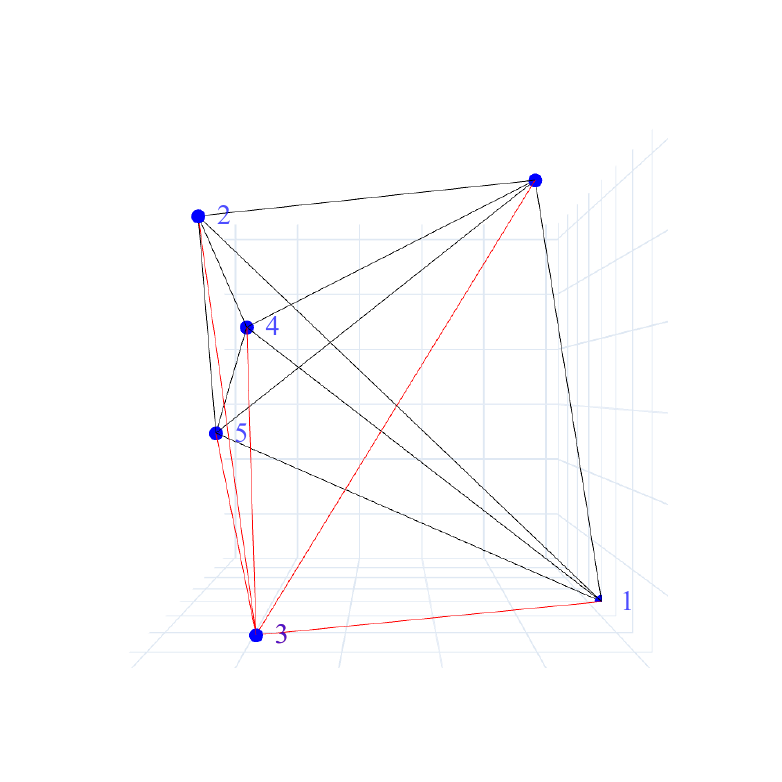}
\end{subfigure}
\hfill
\begin{subfigure}[b]{0.31\textwidth}
  \includegraphics[height=50mm]{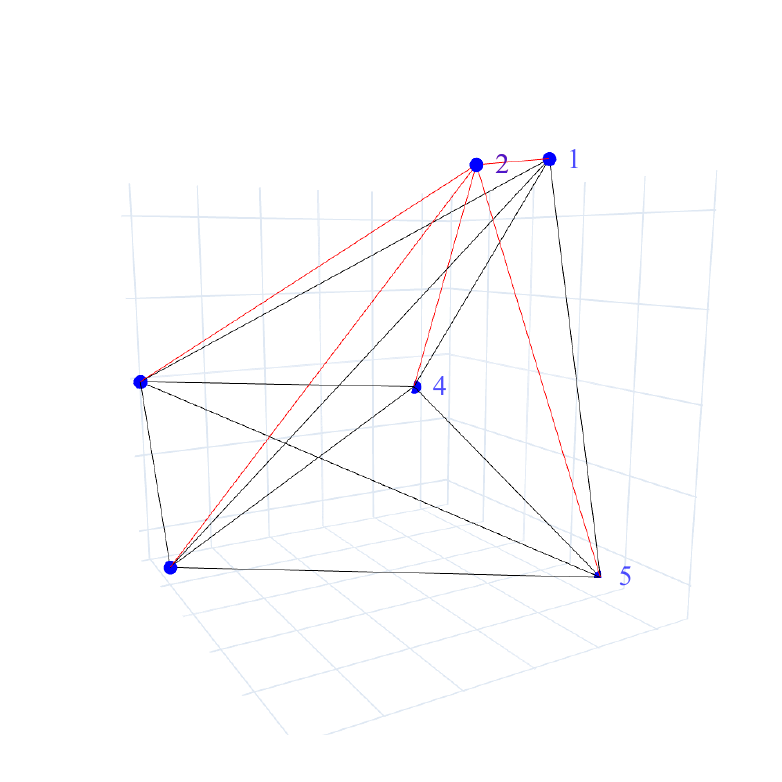}
\end{subfigure}
\hfill
\begin{subfigure}[b]{0.31\textwidth}%
  \includegraphics[height=50mm]{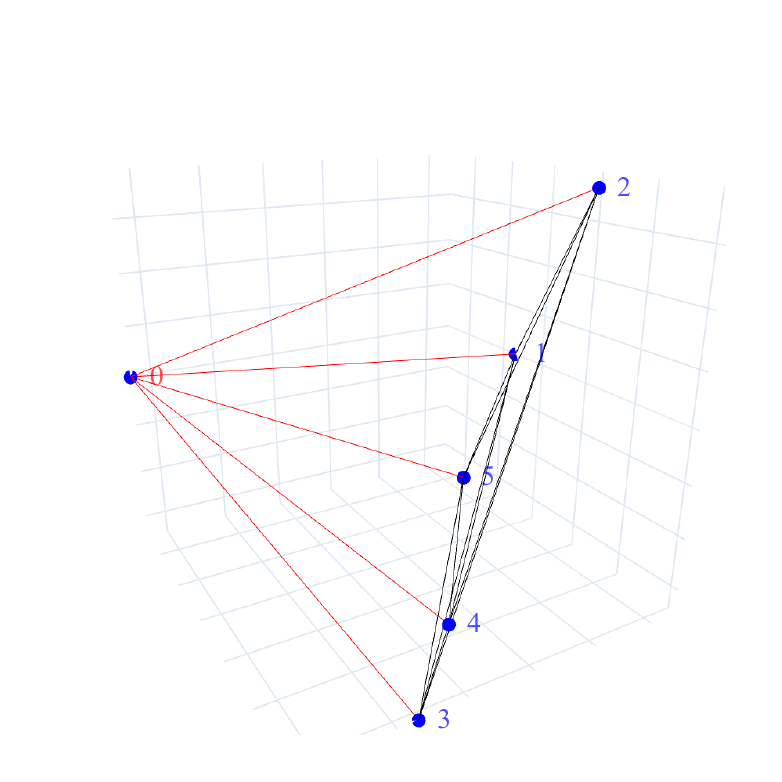}
\end{subfigure}
\begin{subfigure}[b]{0.31\textwidth}
  \includegraphics[width=0.99\linewidth]{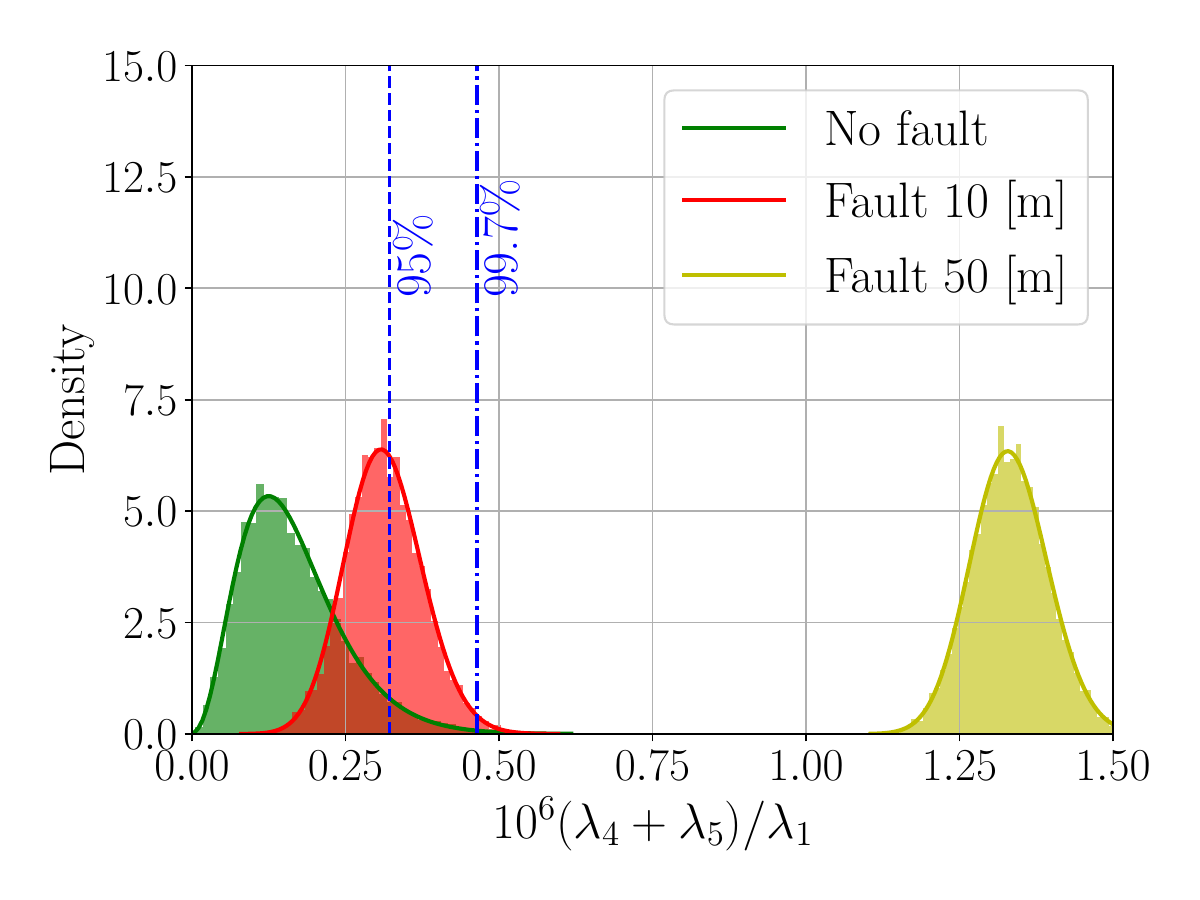}
  \caption{Case 1: A geometry with good fault detectability. There is a higher possibility of being able to detect faults with a 10~\si{\meter} magnitude fault, with its test statistic distribution shifting larger to the right then other two cases.}
  \label{fig:topology1}
\end{subfigure}
\hfill
\begin{subfigure}[b]{0.31\textwidth}
  \includegraphics[width=0.99\linewidth]{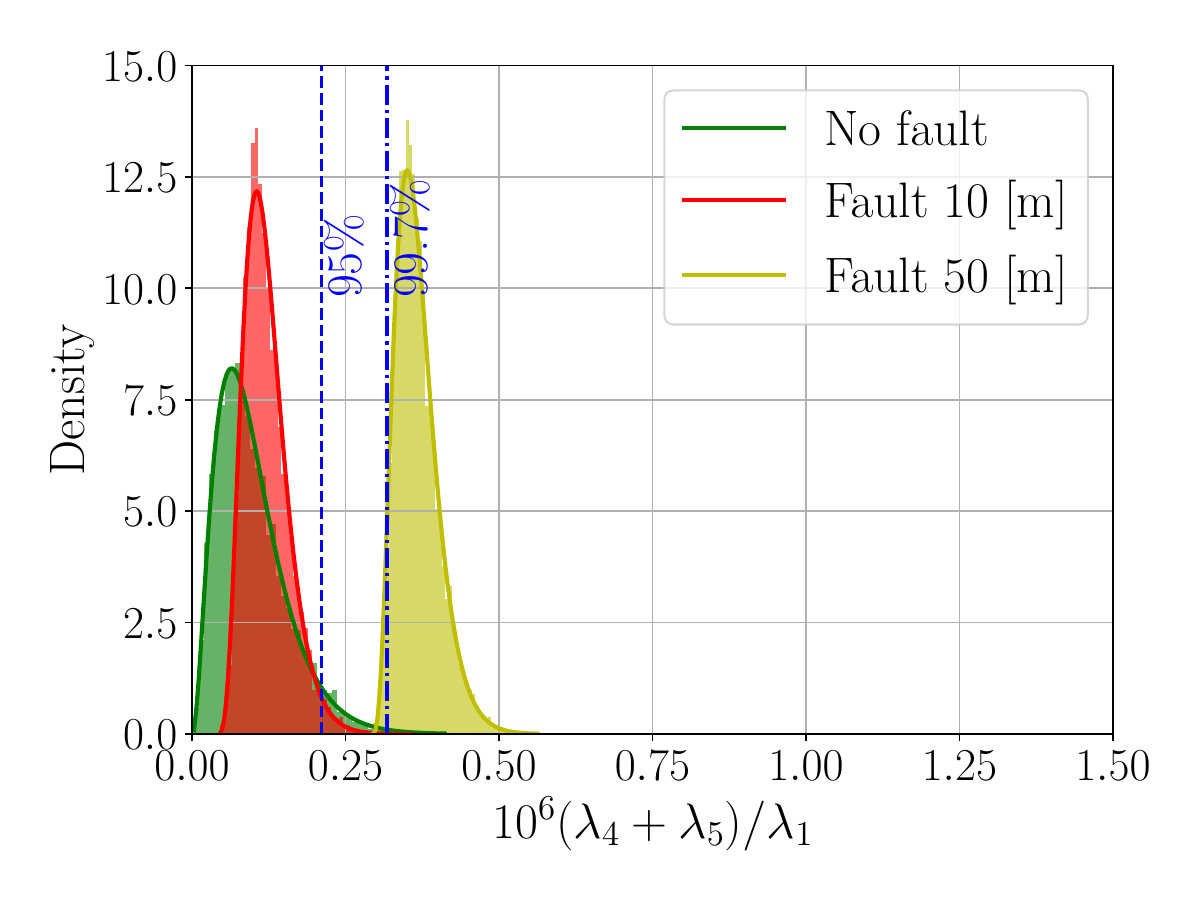}
  \caption{Case 2: A geometry with medium fault detectability. It is difficult to detect faults with 10~\si{\meter} magnitude since the increase in the test statistic is limited compared to the tails of the distribution in the non-fault case. }
  \label{fig:topology2}
\end{subfigure}
\hfill
\begin{subfigure}[b]{0.31\textwidth}%
  \includegraphics[width=0.99\linewidth]{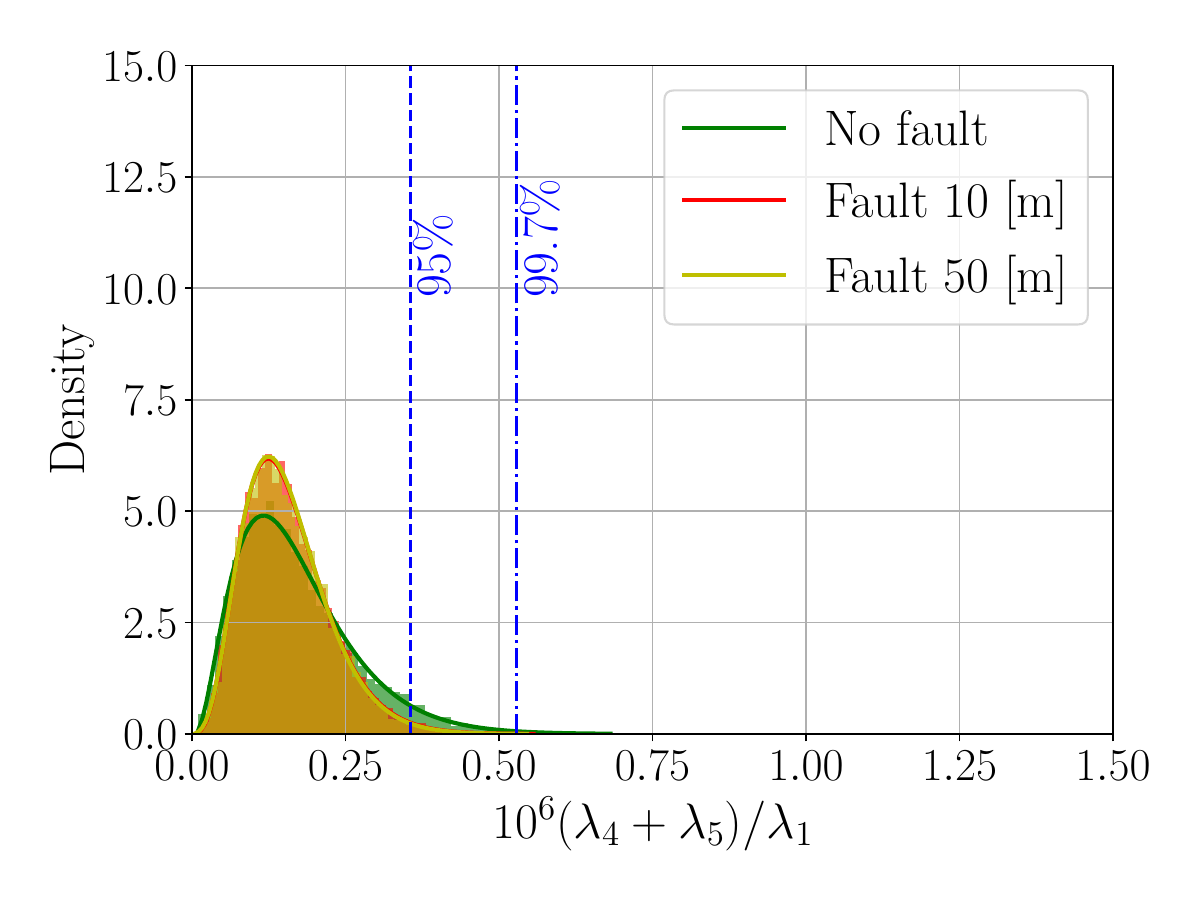}
  \caption{Case 3: A geometry with no fault detectability. The test statistic does increase in fault cases. This is due to the non-preferable 3D topology, where satellites excluding the fault satellites are nearly in the same plane. }
  \label{fig:topology3}
\end{subfigure}

\caption{Three different graph topologies (top row) and the corresponding distribution of the test statistic $\teststatistic = \frac{\lambda_4 + \lambda_5}{\lambda_1}$ (bottom row). The subgraphs are sampled from the lunar constellation shown in Figure \ref{fig:orbit_moon_case}. In the top row, the edges connected to the fault satellites are shown in red. In the bottom row, the histogram of $\teststatistic$ is generated via 10000 Monte-Carlo simulations for different range noises sampled from a Gaussian noise with a standard deviation of 1~\si{\meter}. For each case, we ran the Monte-Carlo simulation for three different fault magnitudes of 0~\si{\meter} (non-fault), 10~\si{\meter}, and 50~\si{\meter}, and fitted the test statistic distribution with a gamma distribution. The presence of a fault satellite changes the distribution of the test statistic $\teststatistic$. }
\label{fig:singular_value_distribution}

\end{figure}

\section{Redundantly Rigid Graph-Based Fault Detection}
\label{sec:algorithm}

\subsection{Clique Listing}
\label{sec:clique-finding}
In order to construct an EDM and GCEDM, we need a set of range measurements between all pairs of $n$~satellites. However, due to the occultation by the planetary body and attitude constraints, the satellites in the constellation are not fully connected with ISRs in most cases.
Therefore, the first step of the fault detection algorithm is to find a set of $k$-clique subgraphs, which are fully connected subgraphs of $k$~satellites. 
For the graph to be 2-vertex redundantly rigid, we need a subgraph with $k \geq 5$.
An example of 5-cliques subgraphs is shown in Figure \ref{fig:cliques_example}.

Various exact algorithms to find all $k$-cliques have been proposed~\citep{Li2020OrderingHF}. In this paper, we used the Chiba-Nishizeki Algorithm (Albo) due to its simplicity of implementation~\citep{Chiba1985}. For each node $v$, Arbo expands the list of $k$-cliques by recursively creating a subgraph induced by $v$'s neighbors. The readers are referred to \citep{Li2020OrderingHF} for the pseudocode and summary of the $k$-clique listing algorithm.
Note that the clique finding algorithm does not have to be executed online. Since the availability of the ISRs can be predicted beforehand using a coarse predicted orbit, we can estimate the topology of the ISRs for future time epochs. We can compute the list of $k$-cliques of the predicted topologies for future time epochs (e.g., one orbit), and uplink them to satellites intermittently. If some of these links were actually not available in orbit for some reason, we can remove the $k$-cliques that contain the missing links from the list.

\begin{figure}[htb!]
    \centering
    \begin{subfigure}[b]{0.24\textwidth}
        \centering
        \includegraphics[width=0.99\linewidth]{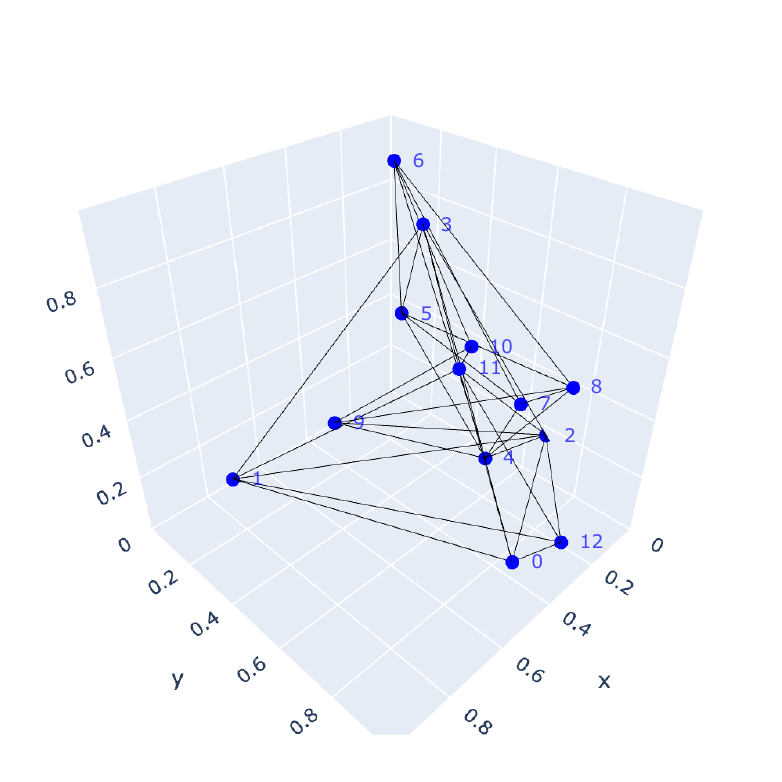}
        \subcaption{The topology of the entire graph}
    \end{subfigure}
    \hfill
    \begin{subfigure}[b]{0.24\textwidth}
        \centering
        \includegraphics[width=0.99\linewidth]{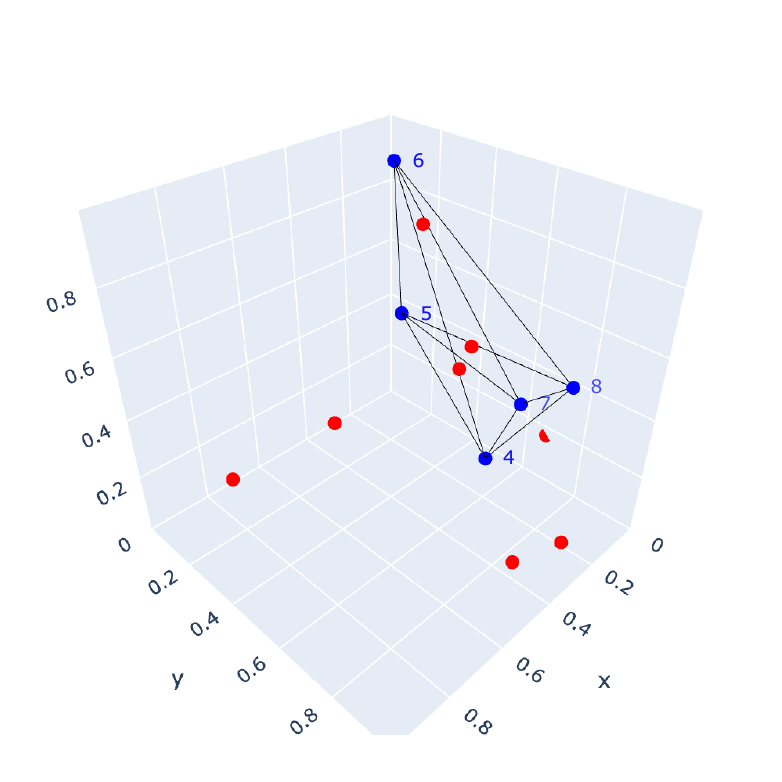}
        \subcaption{5-clique subgraph 1}
    \end{subfigure}
    \hfill
    \begin{subfigure}[b]{0.24\textwidth}
        \centering
        \includegraphics[width=0.99\linewidth]{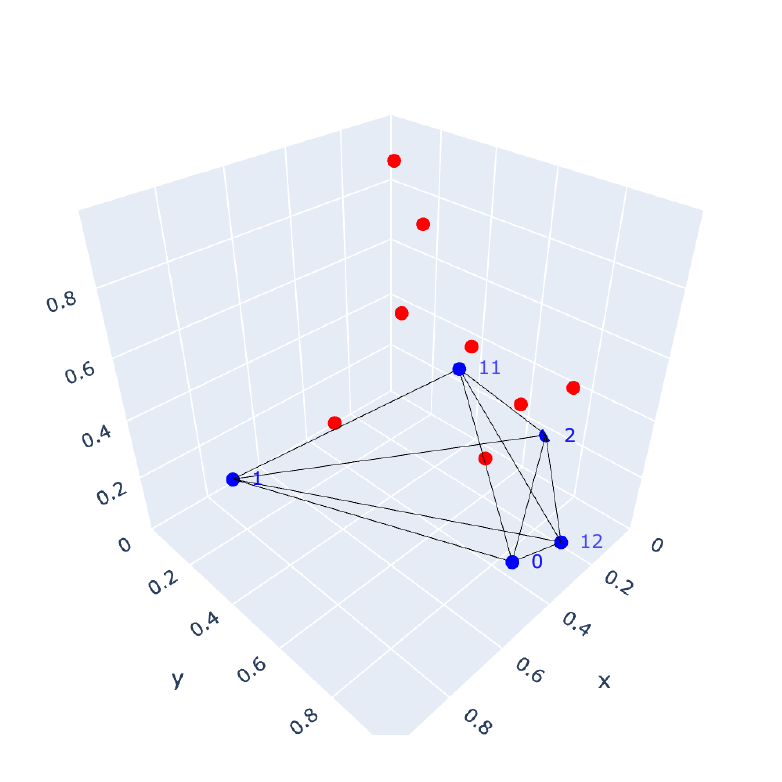}
        \subcaption{5-clique subgraph 2}
    \end{subfigure}
    \hfill
    \begin{subfigure}[b]{0.24\textwidth}
        \centering
        \includegraphics[width=0.99\linewidth]{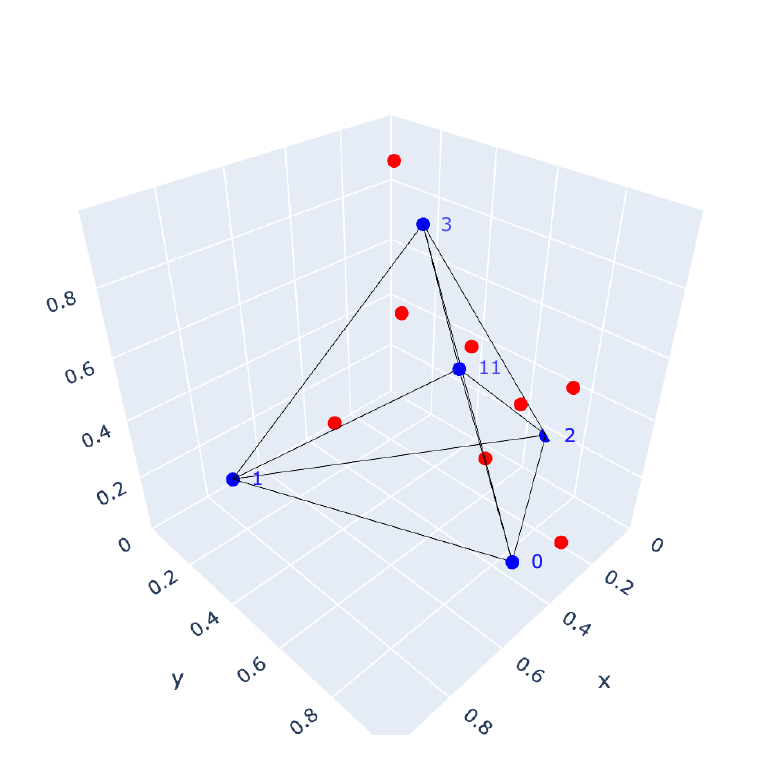}
        \subcaption{5-clique subgraph 3}
    \end{subfigure}
    \caption{Listing 5-cliques within the graph of 13 nodes}
    \label{fig:cliques_example}
\end{figure}

\subsection{Online Fault Detection}
Let $\mathcal{L}^{G}_{k, t}$ be the list of all $k$-cliques of the satellite network at time step $t$, and $N_s$ the total number of satellites.
We propose an online fault detection algorithm, which uses $\mathcal{L}^{G}_{k, t}$. The main routine of the fault detection algorithm is shown in Algorithm \ref{alg:main_loop}. 
The overview of the detection algorithm is as follows. 
First, we judge whether a subgraph ($k$-clique) contains a fault satellite by comparing the user-set threshold $\teststatisticthreshold$ and the observed test statistics value.
If the subgraph is identified as a fault, then we look at the elements of $\bm{u}_4$ (the 4th column of the orthogonal matrix obtained by SVD) to find which satellite within the subgraph is generating the fault.
In particular, we look for the row with the largest magnitude. The satellite corresponding to this row is registered as a ``fault candidate," and a counter will be incremented. 
The satellite that has been registered as a fault candidate for most of the time is considered as a fault satellite and is removed from the entire graph.
This procedure is repeated until there are no more faults.
The key difference compared to the greedy GPS fault-detection algorithm proposed by~\citep{knowles2024greedy} is that we are using multiple subgraphs (EDMs) to detect faults, instead of using a (large) single fully-connected graphs.

The proposed fault detection algorithm has several hyper-parameters that affect its performance. Below, we provide a guide on how the hyper-parameters should be determined.

\textbf{Clique Size $k$}: The size of the clique needs to be 5 or larger to compute the test statistics. The clique size $k=5$ maximizes the number of cliques that can be used for fault detection. However, from observation, we found that when we use $k=5$, it is difficult to determine the fault satellite by looking at the index $\text{argmax}_i (\bm{u}_{4})_{i}$. 
Therefore, we propose to use $k=6$ cliques for fault detection.
An example is shown in Figure \ref{fig:singular_value_cliques}. 

\textbf{Detection time interval $DI = |\bm{t}_{fd}|$}: This controls the number of timesteps used to identify fault satellites. 
In general, the true positive rate (TPR) will increase with larger DI, because the total number of subgraphs increases and the probability that large test statistics will be observed will increase.

\textbf{Least required number of fault subgraphs $\delta_{nf}$}: The algorithm will not detect faults if the total number of subgraphs detected as faults does not exceed this threshold. 
We set this threshold to avoid judging faults from a small number of samples and to reduce the false alarm rate. 

\textbf{Mimimum fault detection ratio $\delta_{rf}$}: The algorithm will not detect any satellite $i$ as having a fault if the ratio of the number of times satellite $i$ detected as faults with respect to the total number of fault subgraphs does not exceed this threshold. This parameter needs to be set based on the maximum number of fault satellites ($n_{fs, max}$). In particular, we require $\delta_{rf} < \frac{1}{n_{fs, max}}$ 

\textbf{Test statistics threshold $\teststatisticthreshold$}: This controls the trade-off between increasing the TPR and decreasing the FPR. 
One way to set this threshold is to sample a sufficiently large number of singular values from subgraphs over different epochs, and compute its $x$ percentile value close to the tail (e.g., 99 percentile).
When a small $\teststatisticthreshold$ is used, more subgraphs are listed as fault subgraphs, which increases the probability of fault satellites being detected, but also increases the probability of listing normal satellites as a fault. 
The threshold $\teststatisticthreshold$ and the detection time interval $DI$ need to be determined by pair by the system designer to get the desired balance between TPR and FPR.

\SetKwComment{Comment}{/* }{ */}
\RestyleAlgo{ruled}
\DontPrintSemicolon
\SetKwFunction{FMain}{Main}
\SetKwFunction{FSVD}{}
\SetKwFunction{FFaultDetection}{FaultDetection}
\SetKwProg{Fn}{Function}{:}{}
\SetKwProg{Pn}{Function}{:}{\KwRet}

\begin{algorithm}[ht!]
\caption{Main Loop of the Fault Detection Algorithm }
\label{alg:main_loop}
    \textbf{Given:} \;
    $\quad\quad k$ \hspace{23mm} $\triangleright$ \ \text{Hyperparameter: size of the clique}  \;
    $\quad\quad \mathcal{L}^{G}_{k, t}$ \hspace{19mm} $\triangleright$ \ \text{List of k-clique subgraphs for timestep $t$}  \;
    $\quad\quad \bm{t}_{fd}$ \hspace{20.2mm} $\triangleright$ \ \text{List of timesteps the fault detection is performed at. Its size is a hyper-parameter.}  \;
    $\quad\quad \delta_{nf}$ \hspace{20mm} $\triangleright$ \ \text{Hyper-parameter: Least number of fault subgraphs required to identify fault }  \;
    $\quad\quad \delta_{rf}$ \hspace{20.4mm} $\triangleright$ \ \text{Hyper-parameter: Minimum fault detection ratio for the satellite to be judged as fault}  \;
    $\quad\quad \teststatisticthreshold$ \hspace{20.0mm} $\triangleright$ \ \text{Hyper-parameter: Test statistics threshold}  \;
                
    \vspace{2mm}
    $\mathcal{L}^F \gets \emptyset $  \hspace{20.5mm}  $\triangleright$ \ \text{List of fault satellites} \;
    $T \gets 0$  \hspace{22.7mm}  $\triangleright$ \ \text{Termination flag, terminate if $1$}   \;
    \While{$T == 0$}{
        $\mathcal{L}^{G}_{k, t}, \mathcal{L}^F, T \gets$ \FFaultDetection{$\mathcal{L}^{G}_{k, t}, \mathcal{L}^F, T, \bm{t}_{fd}, \delta_{nf}, \delta_{rf}, \teststatisticthreshold$}  \hspace{20mm}  $\triangleright$ See Algorithm \ref{alg:fault_detection_function} \;
    }
    \Return{$\mathcal{L}^F$}
\end{algorithm}

\begin{algorithm}[ht!]
\caption{Fault detection function}
\label{alg:fault_detection_function}

\Pn{\FFaultDetection{$\mathcal{L}^{G}_{k, t}, \mathcal{L}^F, T, \bm{t}_{fd}, \delta_{nf}, \delta_{rf}, \teststatisticthreshold$}}{
    \vspace{1mm}
    $\bm{N^f} \gets \zeromat \in \N^{N_s}$  \hspace{10mm}  $\triangleright$ \text{Vector where each element $i$ shows  number of times that satellite $i$ was detected as fault}  \;

    \vspace{2mm}
    $\triangleright$ Step 1: Counting the number of faults \;

    \For {$t \in \bm{t}_{fd} $}{
        \For {$G \in \mathcal{L}^{G}_{k, t}$}{
            \vspace{2mm}
            $\bm{s} \gets \{s_i \ | \  s_i \in G \} $  \hspace{3mm} $\triangleright$ List of satellites in the subgraph G\;
            \vspace{2mm}
            $\triangleright$ Step 1-1: Construct GCEDM from the observed ranges $r$ \;
            {
                $\EDMnoisy_{ij} \gets \satrange_{s_i s_j} \ (\EDMnoisy_{ij} \gets 0$ for $i=j$)  \;
                $\GCEDMnoisy \gets -\frac{1}{2} \GCMat^k \EDMnoisy \GCMat^k$ \;
            }
            \vspace{2mm}
            $\triangleright$ Step 1-2: Singular Value Decomposition (SVD) of the GCEDM \;
            $\bm{U}, \bm{\Sigma}, \bm{V} \leftarrow \text{SVD}(\GCEDMnoisy)$\;
            $\lambda_1, \ldots, \lambda_{k} \gets \text{diag}(\bm{\Sigma})$ \;
            $\bm{U} = \begin{bmatrix} \bm{u_1} & \cdots & \bm{u_k} \end{bmatrix}  \quad (\bm{u}_i \in \R^{k}$) \;
            \vspace{2mm}
            $\triangleright$ Step 1-3: Predict if the subgraph contains a fault satellite \;
            $\teststatistic \gets \frac{\lambda_4 + \lambda_5}{\lambda_1}$ \hspace{46.2mm}  $\triangleright$ Observed test statistic  \;
            \vspace{2mm}
            \If{$\teststatistic > \teststatisticthreshold$}{
                \vspace{1mm}
                $\triangleright$ Step 1-4: Predict which is the fault satellite by looking at the 4th column vector of the orthogonal matrix \;
                $j \gets \text{argmax}_i |(\bm{u}_4)_i|$\;
                $(\bm{N^f})_{s_j} \gets (\bm{N^f})_{s_j} + 1$\;
            }
        }
    }

    $(\bm{R^f})_i = \frac{(\bm{N^f})_i}{\sum_i{(\bm{N^f})_i}} \in \R^{N_s} \quad $  $\triangleright$ Ratio of fault detection of each satellite over all fault detections \;\;
    
    $\triangleright$ Step 2-1: Terminate if we do not have enough fault satellites \;
    \If{$\sum{\bm{N^f}} \leq \delta_{rf}$}{
        $T \gets 1$\;
        \Return{$\mathcal{L}^{G}_{k,t}, \mathcal{L}^F, T$}
    }
    \vspace{1mm}
    $\triangleright$ Step 2-2: Terminate if no satellites have a fault detection ratio over a threshold (i.e., no satellites were sufficiently detected as fault compared to the others) \;
    \If{$\max{(\bm{R^f})} \leq \delta_{nf}$}{
        $T \gets 1$\;
        \Return{$\mathcal{L}^{G}_{k,t}, \mathcal{L}^F, T$}
    }
    \vspace{2mm}
    $\triangleright$ Step 3: Remove the satellite which was detected as fault for most times \;
    $s_{\text{fault}} \gets \text{argmax}_{j} \ (\bm{N^f})_{j}$ \;
    $\mathcal{L}^{G}_{k,t} \gets \{G \ | \ G \in \mathcal{L}^{G}_{k,t}, s_{\text{fault}} \notin G \} \quad \triangleright$  \text{Remove all cliques that contains the fault satellite} \;
    $\mathcal{L}^{F} \gets \mathcal{L}^{F} \cup \{ s_{fault} \} \quad \triangleright$ \text{Add the fault satellite to the list} \;
    $T \gets 0$ \;
    \Return{$\mathcal{L}^{G}_{k,t}, \mathcal{L}^F, T$} 
}
\end{algorithm}

\begin{figure}[htb!]
\begin{subfigure}[b]{0.38\linewidth}
    \centering
    \includegraphics[width=\textwidth]{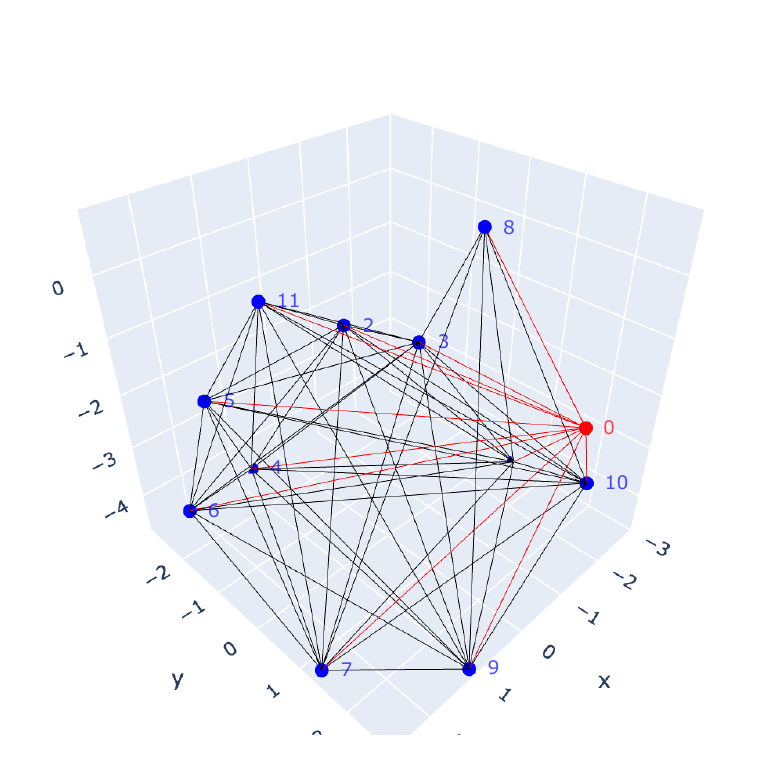}
    \subcaption{The geometry and topology of the satellite links. The fault satellite 0 and its connected edges are shown in red. The units are normalized by the Moon radius.}
\end{subfigure}
\hfill
\begin{subfigure}[b]{0.60\linewidth}
    \centering
    \includegraphics[width=\textwidth]{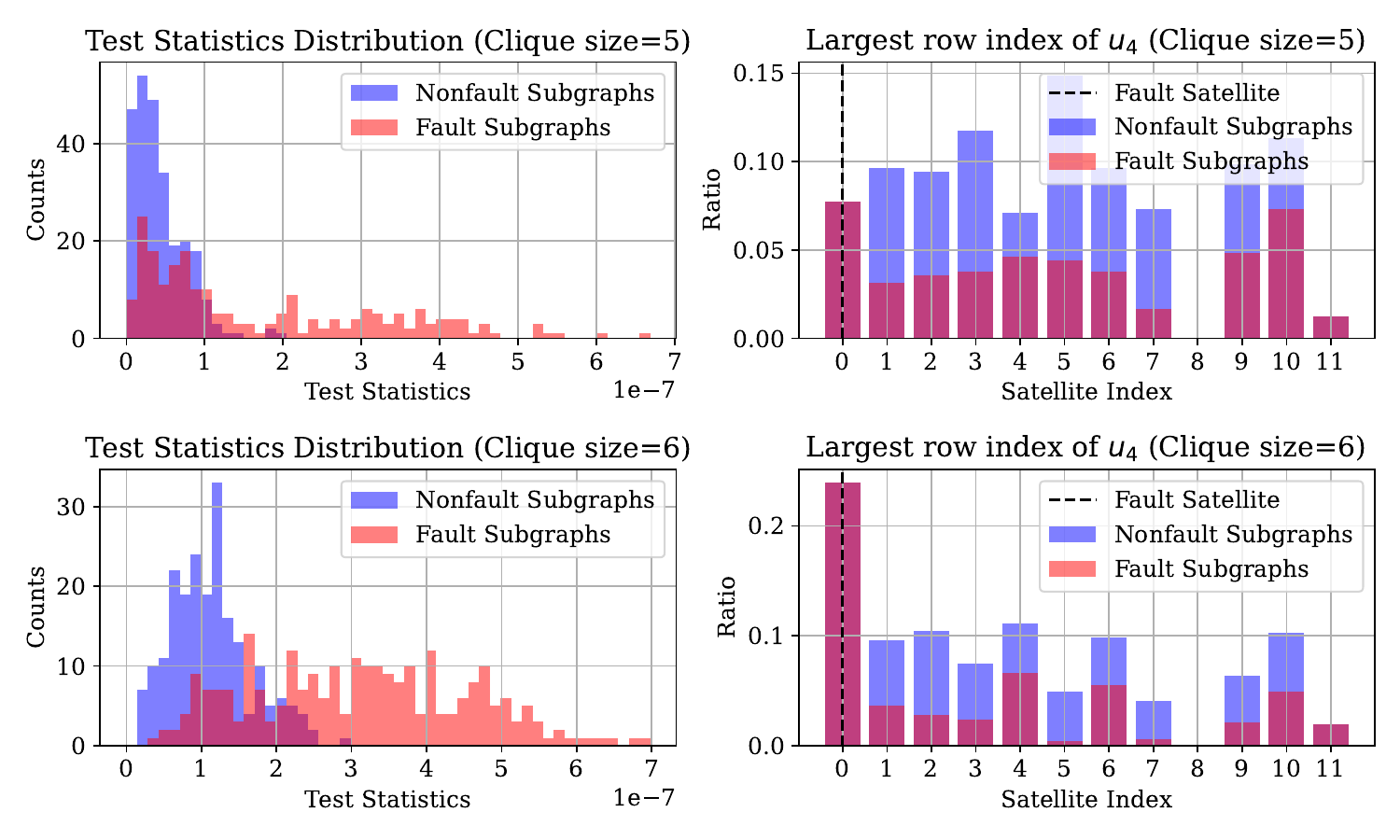}
    \subcaption{The distribution of the test statistic $\teststatistic$ and the largest row of the 4th column of the orthogonal matrix ($\text{argmax}_i |(\bm{u}_4)_{i}|$). When the clique size is 6, we can detect the fault satellite $s_f = 0$ by counting the number of subgraphs where the largest index of $\bm{u}_4$ corresponds to $s_f$.}
    \label{fig:singular_vectors}
\end{subfigure}
\caption{The distribution of the test statistic $\teststatistic$ and the row index which has the maximum element of the 4th singular vector $\bm{u}_4$. The geometry and topology of the links were taken from a timestep for the lunar constellation in Figure \ref{fig:orbit_moon_case}. The noise of $\sigma_w=1$ m and fault of $\bar{b}=20$ m is added. As shown in Figure \ref{fig:singular_vectors}, we can detect faults from singular vectors more robustly, by using 6-cliques compared 5-cliques.  }
\label{fig:singular_value_cliques}
\end{figure}

\section{Satellite Fault Detection Simulation}
\label{sec:simulation}

\subsection{Simulation Configuration and Evaluation Metrics}
We validate our proposed algorithms for the constellations around the Moon. 

\subsubsection{Constellation}
We consider an Elliptical Lunar Frozen Orbit (ELFO) constellation with a total of 12 satellites equally spread to 4 different orbital planes. 
The orbital elements of the constellation are shown in Table \ref{tab:moon_constellation}, and
the 3-dimentional plots of the constellation and the available ISls at $t=0$ are shown in Figure \ref{fig:moon_orbit}. 
The orbits are propagated in the two-body propagator without perturbations.

The visibility of the links is calculated assuming that the links are visible when it is not occulted by the Moon.
The number of 6-cliques in the constellation with the number of subgraphs containing satellite 1,2,3 (satellites in the first plane) is shown in Figure \ref{fig:moon_nclique}.
Each satellite experiences a time window where they have zero or few self-containing subgraphs, where faults cannot be detected.

\begin{table}[ht!]
    \centering
    \caption{Orbital elements of the ELFO constellation (Case 2)}
    \begin{tabular}{|c|c c c c c c|}  \hline
       Plane & Semi-Major Axis & Eccentricity & Inclination  & RAAN &  Argument of Periapsis & 
       Mean Anomaly  \\
       &  $a$ [km] & $e$ []  & $i$ [deg] & $\Omega$ [deg] & $\omega$ [deg]  & $M$ [deg]       \\ \hline
       1 &  6142.4 & 0.6 & 57.7 &  -90 & 90 & [0, 120, 240] \\
       2 &  6142.4 & 0.6 & 57.7 &  0 & 90 & [30, 150, 270] \\
       3 &  6142.4 & 0.6 & 57.7 &  90 & 90 & [60, 180, 300] \\ 
       4 &  6142.4 & 0.6 & 57.7 &  180 & 90 & [90, 210, 330] \\  \hline
    \end{tabular}
    \label{tab:moon_constellation}
\end{table}

\begin{figure}[tb!]
    \centering
    \begin{subfigure}[b]{0.49\textwidth}
        \centering
        \includegraphics[width=\linewidth]{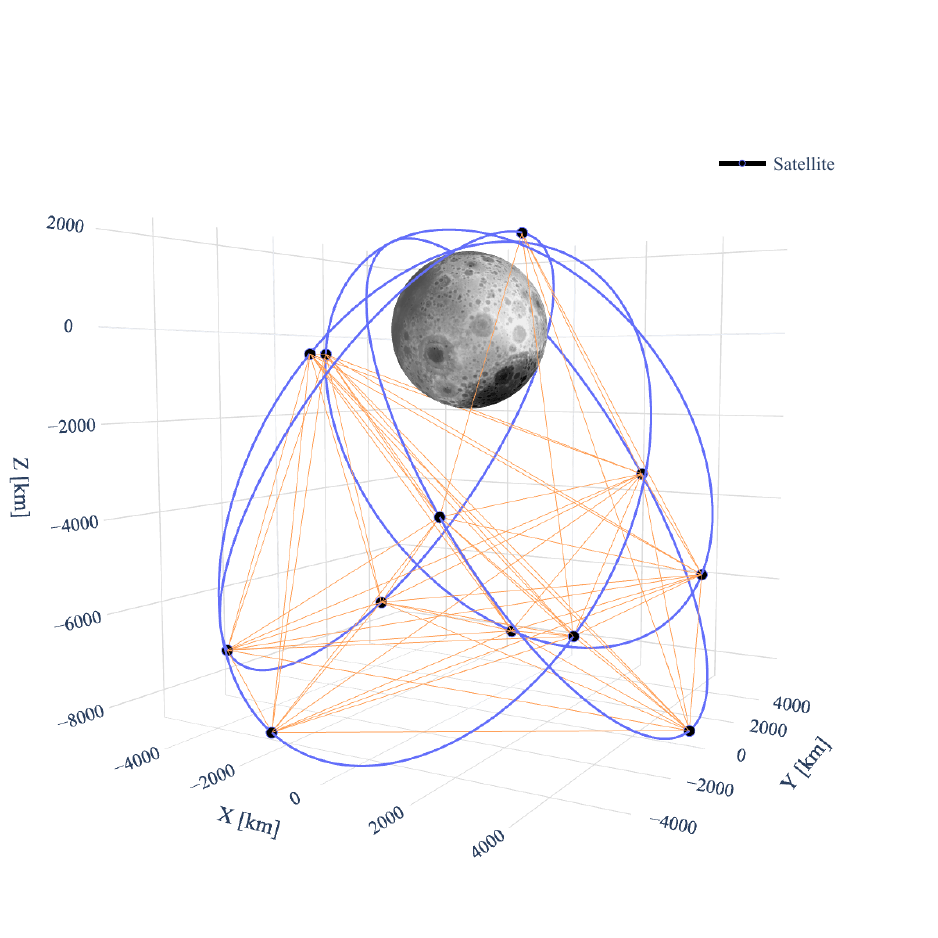}
        \subcaption{Orbital planes with satellite position (black points) and links (orange line) at the initial time epoch. Satellites near the perilune have only 4 or 5 links available due to occultation by the Moon. }
        \label{fig:moon_orbit}
    \end{subfigure}
    \hfill
    \begin{subfigure}[b]{0.49\textwidth}
        \centering
        \includegraphics[width=\linewidth]{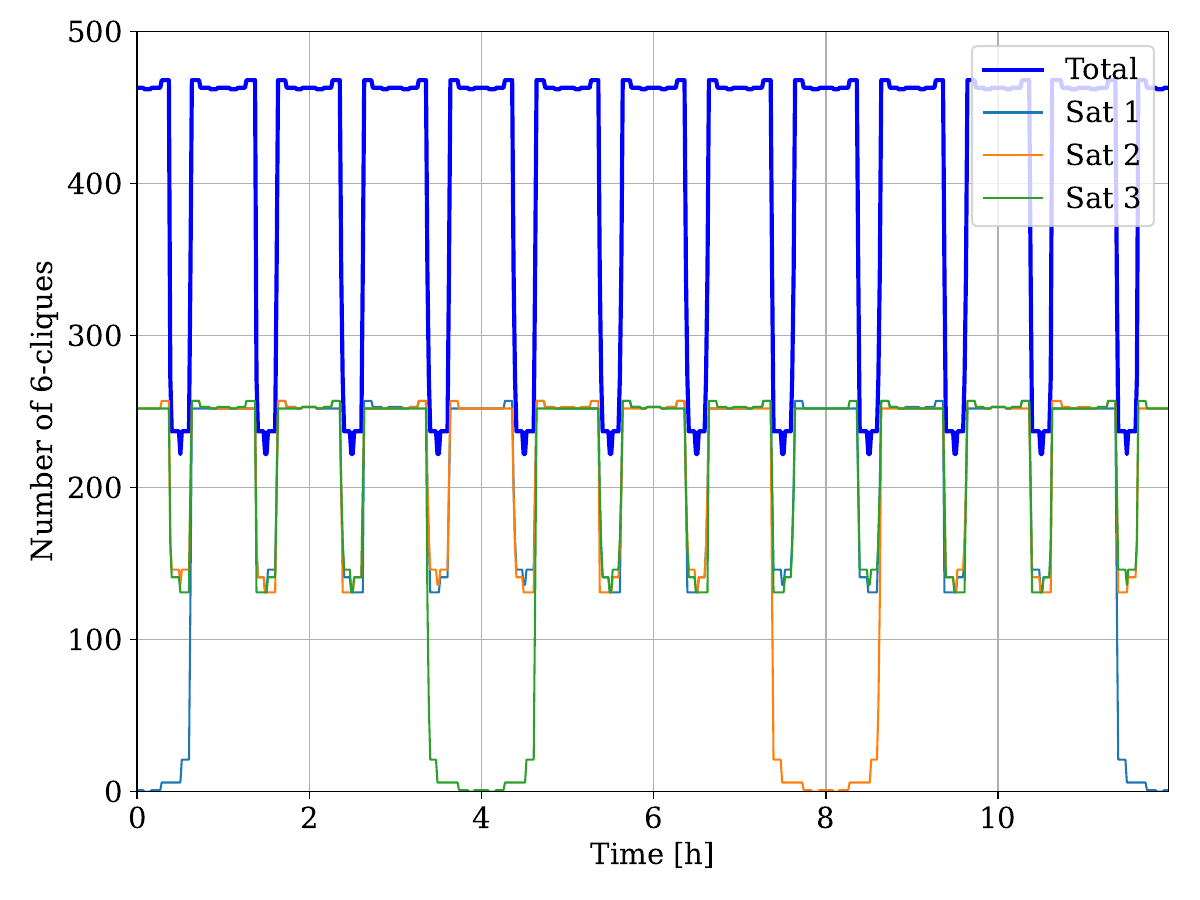}
        \subcaption{Number of 6-cliques for different time epochs. The satellite near the perilune has few or zero 6-cliques containing itself due to the occultation by the Moon. Sat 1, 2, 3 are the 3 satellites in plane 1.}
        \label{fig:moon_nclique}
    \end{subfigure}
    \caption{Elliptical lunar frozen orbit constellation around Moon with 12 satellites. }
    \label{fig:orbit_moon_case}
\end{figure}

\subsubsection{Simulation Parameters}
We assumed that the two-way noise measurement error is $\sigma_w = 1$ m.
We tested 180 combinations of different detection and simulation parameters: 3 different numbers of fault satellites (1, 2, or 3), 5 different fault magnitudes ($\bar{b}$ = 5, 8, 10, 15, 20 m), and 4 different detection time intervals $(DI = 1, 2, 3, 5)$, and 3 different test statistic thresholds $\teststatisticthreshold$. 
The timestep between each detection for $DI \geq 2$ is set to 60 seconds.

The three thresholds are computed by sampling the test statistics from all the (non-fault but noisy) 6-cliques for one orbital period, and computing the $95, 99$, and $99.9$ percentile value. 
The test statistics are sampled every 60s for one orbital period, which results in total of 256742 subgraphs.
The computed thresholds for $95, 99$, and $99.9$ percentile are $\teststatisticthreshold^{95} = 3.58 \times 10^{-7}, \teststatisticthreshold^{99} = 4.57 \times 10^{-7}, $ and $\teststatisticthreshold^{99.9} = 5.86 \times 10^{-7}$, respectively. The distribution and the percentile values of the sampled test statistics are shown in Figure \ref{fig:sampled_test_statistics}.

For each of these cases, we run 500 Monte-Carlo analyses with randomly sampled initial time (within one orbital period) and a set of fault satellites. 
For the other hyper-parameters, we used $k=6, \delta_{nf} = 10, \delta_{rf} = 0.2$.
Note that the seed of the Monte-Carlo simulation is fixed so that the performance for different parameters is compared among the same conditions except the parameters.

\begin{figure}[ht!]
    \centering
    \includegraphics[width=0.6\linewidth]{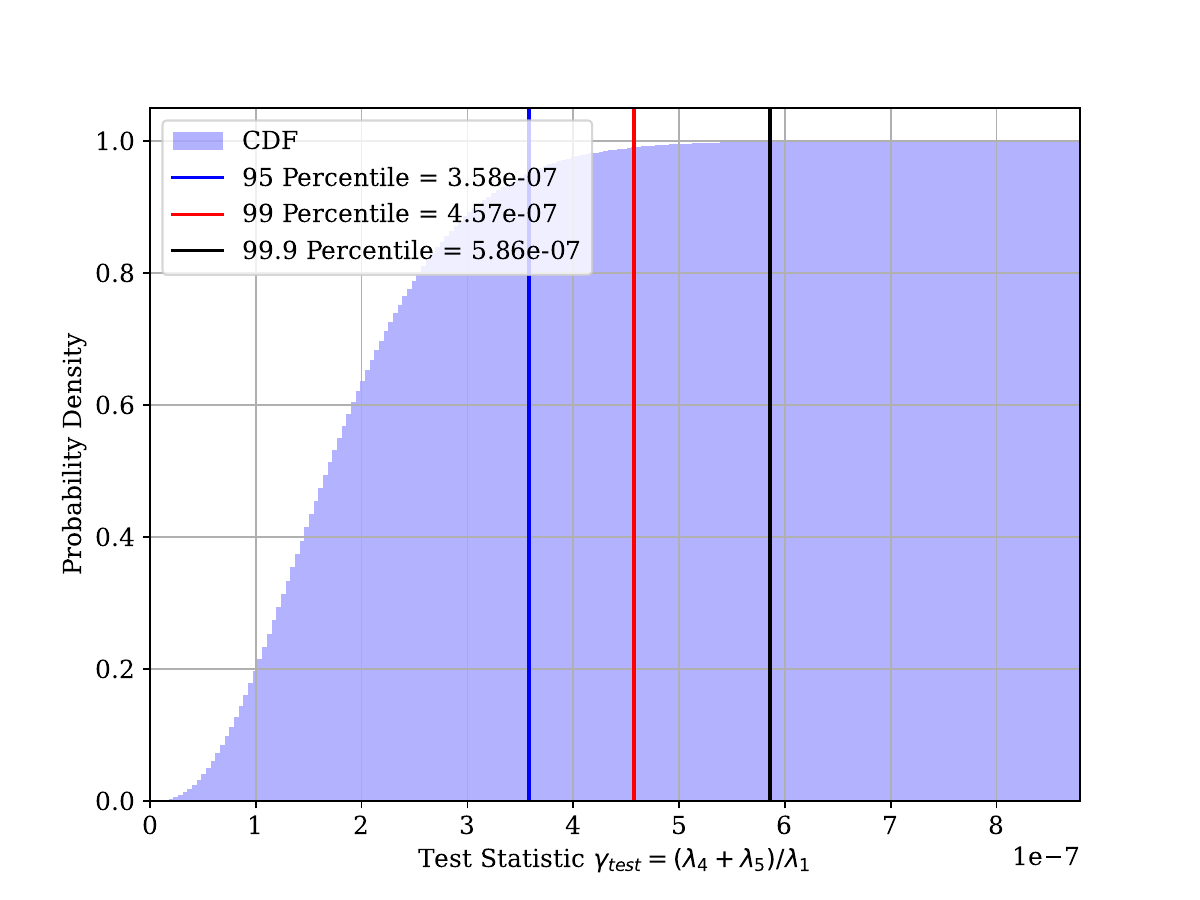}
    \caption{The cumulative distribution of the sampled test statistics $\gamma_{test}$. The test statistics are sampled from all 6-cliques once every 60s for one orbital period. The 95, 99, and 99.9 percentiles are used as thresholds ($\teststatisticthreshold^{95}, \teststatisticthreshold^{99}, \teststatisticthreshold^{99.9}$) to detect faults.}
    \label{fig:sampled_test_statistics}
\end{figure}

\subsubsection{Evaluation Metrics}
We compute the following metrics to evaluate the performance of the fault detection algorithm
\begin{align}
    \text{True Positive Rate or Recall} \ (TPR) &= \frac{TP}{TP + FN} \\
    \text{False Positive Rate} \ (FPR) &= \frac{FP}{FP + TN} \\
    P_4 \ \text{metric} \ (P_4) &= \frac{4 \cdot TP \cdot TN}{4 \cdot TP \cdot TN + (TP + TN) \cdot (FP + FN)} 
\end{align}
where $TP, FN, FP, TN$ is the total number of true positives (fault satellite classified as fault), false negatives (fault satellite classified as non-fault), false positives (non-fault satellite classified as fault), and true negatives (non-fault satellite classified as non-fault). $P_4$ value is the harmonic mean of four key conditional probabilities (precision, recall, specificity, and negative predictive value), and $P_4=1$ requires all of these values to be 1~\citep{Sitarz_2023}.
\subsection{Simulation Result}
\label{sec:moon_sim}
The fault detection results for different combinations of simulation and detector parameters are shown in Table \ref{tab:MoonResult}.
The TPR, FPR, and P4 values improve as fault magnitude becomes larger because it becomes easier for the detector to distinguish faults from noise.
However, the TPR is capped around 0.92, because at any time there will be 1 satellite (8.3$\%$ of 12) located near the perilune, whose fault cannot be detected because of the limited number of self-containing subgraphs.

Figure \ref{fig:comparison_thrshold} and Figure \ref{fig:comparison_dl} show the relationship between the test statistic threshold, detection length, and the three performance metrics.
In general, TPR improves as the test statistic threshold $\teststatisticthreshold$ decreases and detection length $DL$ increases. 
On the contrary, FPR improves as the threshold increases and detection length decreases. 
This is because lowering the threshold and increasing the detection length both increase the number of subgraphs detected as faults, which increases the probability of detecting a faulty satellite but also increases the probability of detecting normal satellites as faults.
The P4 value, which shows the balance, is maximized at higher thresholds and longer DL when fault magnitudes are smaller, and vice versa for larger fault magnitudes. 

Figure \ref{fig:comparison_nfaults} shows the relationship between the number of faults and the three performance metrics, for a fixed detection threshold and detection length. In general, TPR, FPR, and $P_4$ value all improve when the number of faults is smaller, except for TPR and $P_4$ in small fault magnitude cases where it becomes easier to detect at least one fault out of multiple faults.
Our proposed algorithms removes fault satellites in a greedy manner, so when there are multiple fault satellites, the larger perturbation in the 4th and 5th singular value and the 4th singular vector makes it difficult to determine the correct set of fault satellites.
We could potentially improve our detection algorithm by also looking at the singular values of larger indices (e.g., 6th and 7th), but that would require using larger subgraphs for detection.

\begin{table}[ht!]
\centering
\caption{Fault detection results for different numbers of faults, fault magnitude, fault detection threshold, and detection length (DL). The best TPR, FPR, and P4 values for each (number of faults, fault magnitude) pair are highlighted in blue. 
All three metrics improve when the fault magnitude is larger, and the number of faults is smaller.
TPR improves as the threshold decreases and detection length increases, and FPR improves as the threshold increases and detection length decreases. 
The $P_4$ values are better at higher thresholds and longer DL at smaller fault magnitudes, and vice versa for larger fault magnitudes.  }
\begin{tabular}{Ic|c|cI  c|c|cI  c|c|cI  c|c|cI}
\bhline{1pt}
\multirow{3}{*}{Faults} & Threshold & \multirow{2}{*}{DL} & \multicolumn{3}{cI}{True Positive Rate} & \multicolumn{3}{cI}{False Positive Rate} & \multicolumn{3}{cI}{P4 value} \\
 & $\teststatisticthreshold$  &  & \multicolumn{3}{cI}{Fault Magnitude [m]} & \multicolumn{3}{cI}{Fault Magnitude [m]} & \multicolumn{3}{cI}{Fault Magnitude [m]} \\
 & [$\%$] & [] & 5 & 10 & 20 & 5 & 10 & 20 & 5 & 10 & 20 \\
\bhline{1pt}
\multirow{12}{*}{1} & \multirow{4}{*}{99.9} & 1  & 0.006 & 0.374 & 0.864 & \color{blue}{ \textbf{ 0.001 } } & \color{blue}{ \textbf{ 0.000 } } & \color{blue}{ \textbf{ 0.001 } } & 0.023 & 0.698 & 0.955 \\
 &  & 2 & 0.042 & 0.582 & 0.896 & 0.001 & 0.002 & 0.002 & 0.147 & 0.833 & 0.964 \\
 &  & 3 & 0.048 & 0.742 & 0.908 & 0.002 & 0.002 & \color{blue}{ \textbf{ 0.001 } } & 0.165 & 0.908 & \color{blue}{ \textbf{ 0.969 } } \\
 &  & 5 & 0.124 & 0.854 & 0.918 & 0.005 & 0.001 & 0.003 & 0.345 & \color{blue}{ \textbf{ 0.951 } } & 0.968 \\
\cline{2-12}
 & \multirow{4}{*}{99} & 1  & 0.146 & 0.776 & 0.900 & 0.015 & 0.005 & 0.006 & 0.362 & 0.911 & 0.952 \\
 &  & 2 & 0.450 & 0.866 & 0.914 & 0.023 & 0.014 & 0.013 & 0.682 & 0.917 & 0.936 \\
 &  & 3 & 0.530 & 0.874 & 0.924 & 0.032 & 0.026 & 0.019 & 0.712 & 0.888 & 0.922 \\
 &  & 5 & 0.744 & 0.892 & \color{blue}{ \textbf{ 0.926 } } & 0.050 & 0.041 & 0.039 & \color{blue}{ \textbf{ 0.774 } } & 0.854 & 0.871 \\
\cline{2-12}
 & \multirow{4}{*}{95} & 1  & 0.478 & 0.866 & 0.908 & 0.055 & 0.039 & 0.037 & 0.618 & 0.852 & 0.870 \\
 &  & 2 & 0.712 & 0.896 & 0.922 & 0.076 & 0.069 & 0.075 & 0.703 & 0.792 & 0.790 \\
 &  & 3 & 0.758 & 0.888 & 0.918 & 0.098 & 0.096 & 0.098 & 0.681 & 0.736 & 0.743 \\
 &  & 5 & \color{blue}{ \textbf{ 0.808 } } & \color{blue}{ \textbf{ 0.906 } } & 0.922 & 0.108 & 0.115 & 0.116 & 0.685 & 0.709 & 0.713 \\
\cline{2-12}
\bhline{1pt}
\multirow{12}{*}{2} & \multirow{4}{*}{99.9} & 1  & 0.109 & 0.328 & 0.813 & \color{blue}{ \textbf{ 0.004 } } & 0.025 & 0.005 & 0.319 & 0.606 & 0.929 \\
 &  & 2 & 0.173 & 0.458 & 0.853 & 0.007 & 0.026 & 0.005 & 0.436 & 0.714 & 0.944 \\
 &  & 3 & 0.216 & 0.515 & 0.863 & 0.009 & 0.027 & 0.007 & 0.500 & 0.750 & 0.945 \\
 &  & 5 & 0.250 & 0.608 & 0.884 & 0.018 & 0.033 & \color{blue}{ \textbf{ 0.005 } } & 0.531 & 0.795 & \color{blue}{ \textbf{ 0.956 } } \\
\cline{2-12}
 & \multirow{4}{*}{99} & 1  & 0.246 & 0.603 & 0.854 & 0.022 & \color{blue}{ \textbf{ 0.022 } } & 0.009 & 0.521 & 0.810 & 0.939 \\
 &  & 2 & 0.376 & 0.721 & 0.875 & 0.030 & 0.028 & 0.010 & 0.643 & 0.857 & 0.944 \\
 &  & 3 & 0.439 & 0.774 & 0.880 & 0.045 & 0.027 & 0.015 & 0.672 & \color{blue}{ \textbf{ 0.881 } } & 0.938 \\
 &  & 5 & 0.537 & 0.802 & \color{blue}{ \textbf{ 0.896 } } & 0.060 & 0.038 & 0.020 & 0.716 & 0.877 & 0.937 \\
\cline{2-12}
 & \multirow{4}{*}{95} & 1  & 0.417 & 0.775 & 0.865 & 0.055 & 0.032 & 0.023 & 0.641 & 0.874 & 0.921 \\
 &  & 2 & 0.581 & 0.815 & 0.875 & 0.071 & 0.056 & 0.045 & 0.727 & 0.856 & 0.894 \\
 &  & 3 & 0.627 & 0.841 & 0.885 & 0.090 & 0.067 & 0.066 & \color{blue}{ \textbf{ 0.729 } } & 0.852 & 0.870 \\
 &  & 5 & \color{blue}{ \textbf{ 0.664 } } & \color{blue}{ \textbf{ 0.850 } } & 0.886 & 0.106 & 0.093 & 0.090 & 0.728 & 0.823 & 0.840 \\
\cline{2-12}
\bhline{1pt}
\multirow{12}{*}{3} & \multirow{4}{*}{99.9} & 1  & 0.161 & 0.277 & 0.620 & \color{blue}{ \textbf{ 0.010 } } & \color{blue}{ \textbf{ 0.050 } } & \color{blue}{ \textbf{ 0.043 } } & 0.414 & 0.537 & 0.800 \\
 &  & 2 & 0.197 & 0.338 & 0.659 & 0.018 & 0.060 & 0.046 & 0.463 & 0.589 & 0.817 \\
 &  & 3 & 0.214 & 0.379 & 0.685 & 0.024 & 0.063 & 0.046 & 0.482 & 0.622 & 0.830 \\
 &  & 5 & 0.257 & 0.410 & 0.713 & 0.030 & 0.078 & 0.048 & 0.531 & 0.634 & 0.841 \\
\cline{2-12}
 & \multirow{4}{*}{99} & 1  & 0.233 & 0.428 & 0.675 & 0.032 & 0.055 & 0.047 & 0.501 & 0.669 & 0.824 \\
 &  & 2 & 0.297 & 0.513 & 0.721 & 0.054 & 0.066 & 0.045 & 0.554 & 0.717 & 0.848 \\
 &  & 3 & 0.330 & 0.566 & 0.732 & 0.068 & 0.068 & 0.049 & 0.574 & 0.747 & \color{blue}{ \textbf{ 0.849 } } \\
 &  & 5 & 0.399 & 0.597 & \color{blue}{ \textbf{ 0.739 } } & 0.078 & 0.075 & 0.052 & 0.626 & 0.758 & 0.849 \\
\cline{2-12}
 & \multirow{4}{*}{95} & 1  & 0.337 & 0.572 & 0.689 & 0.071 & 0.067 & 0.050 & 0.579 & 0.751 & 0.828 \\
 &  & 2 & 0.409 & 0.625 & 0.725 & 0.101 & 0.074 & 0.052 & 0.614 & 0.774 & 0.843 \\
 &  & 3 & 0.459 & \color{blue}{ \textbf{ 0.672 } } & 0.734 & 0.107 & 0.074 & 0.061 & 0.645 & \color{blue}{ \textbf{ 0.797 } } & 0.838 \\
 &  & 5 & \color{blue}{ \textbf{ 0.497 } } & 0.671 & 0.729 & 0.112 & 0.089 & 0.070 & \color{blue}{ \textbf{ 0.666 } } & 0.783 & 0.828 \\
\cline{2-12}
\bhline{1pt}
\end{tabular}
\label{tab:MoonResult}
\end{table}

\begin{figure}[htb!]
    \centering
    \includegraphics[width=0.95\linewidth]{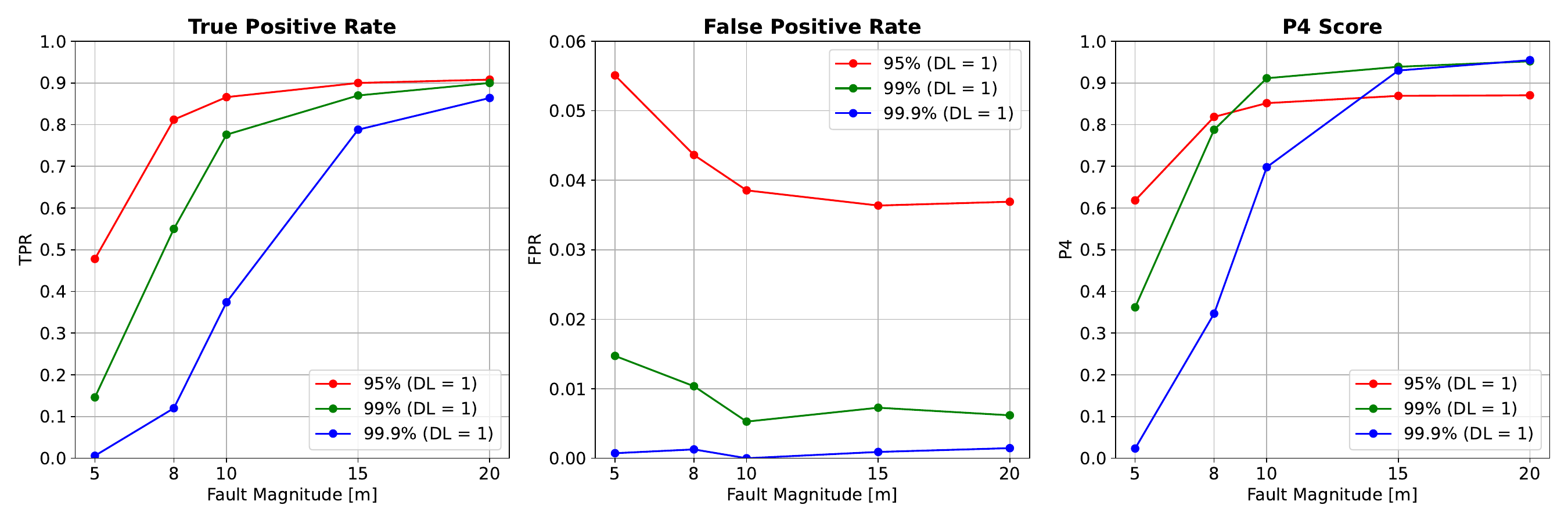}
    \caption{The detection results for different fault magnitudes and the test statistic threshold $\teststatisticthreshold$. The number of faults and the detection length is fixed to 1. Lowering the test statistic threshold results in larger TPR and lower FPR. The $P_4$ score is larger at smaller thresholds for smaller fault magnitudes, and at larger thresholds for larger fault magnitudes. }
    \label{fig:comparison_thrshold}
\end{figure}

\begin{figure}[htb!]
    \centering
    \includegraphics[width=0.95\linewidth]{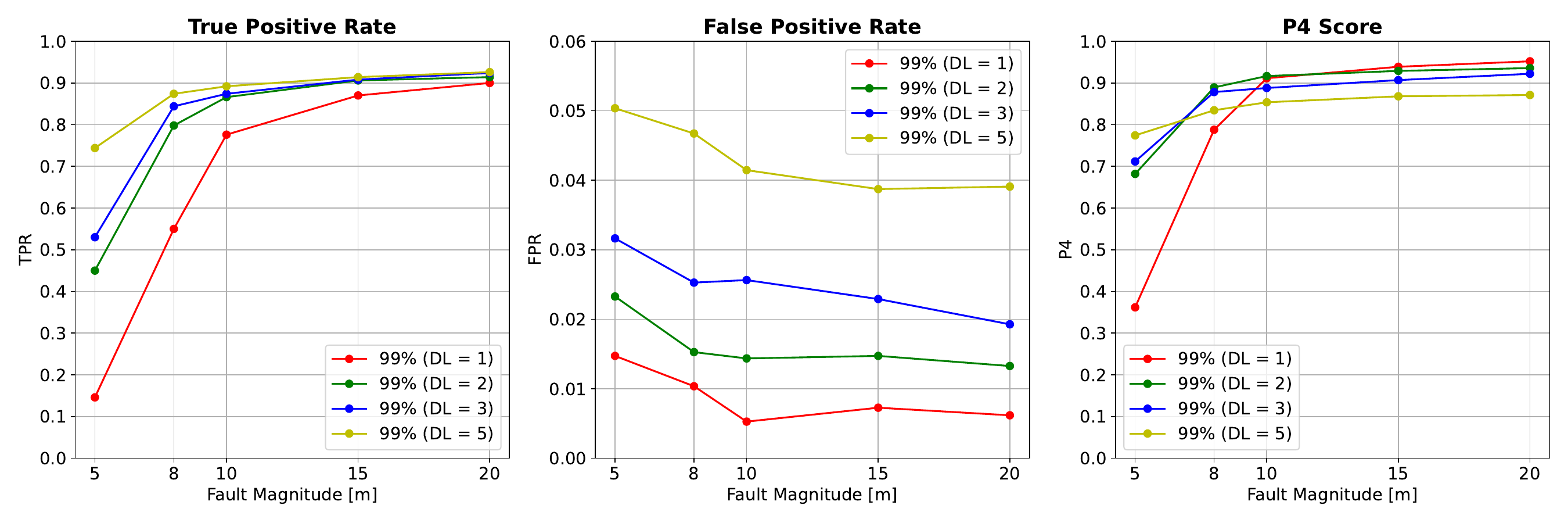}
    \caption{The detection results for different fault magnitudes and the detection length $DL$. The test statistic threshold is fixed to $\teststatisticthreshold^{99}$, and the number of faults is fixed to 1. Increasing the $DL$ results in a larger TPR and lower FPR. The $P_4$ score is larger at longer $DL$ for smaller fault magnitudes, and at shorter $DL$ for larger fault magnitudes.}
    \label{fig:comparison_dl}
\end{figure}

\begin{figure}[htb!]
    \centering
    \includegraphics[width=0.95\linewidth]{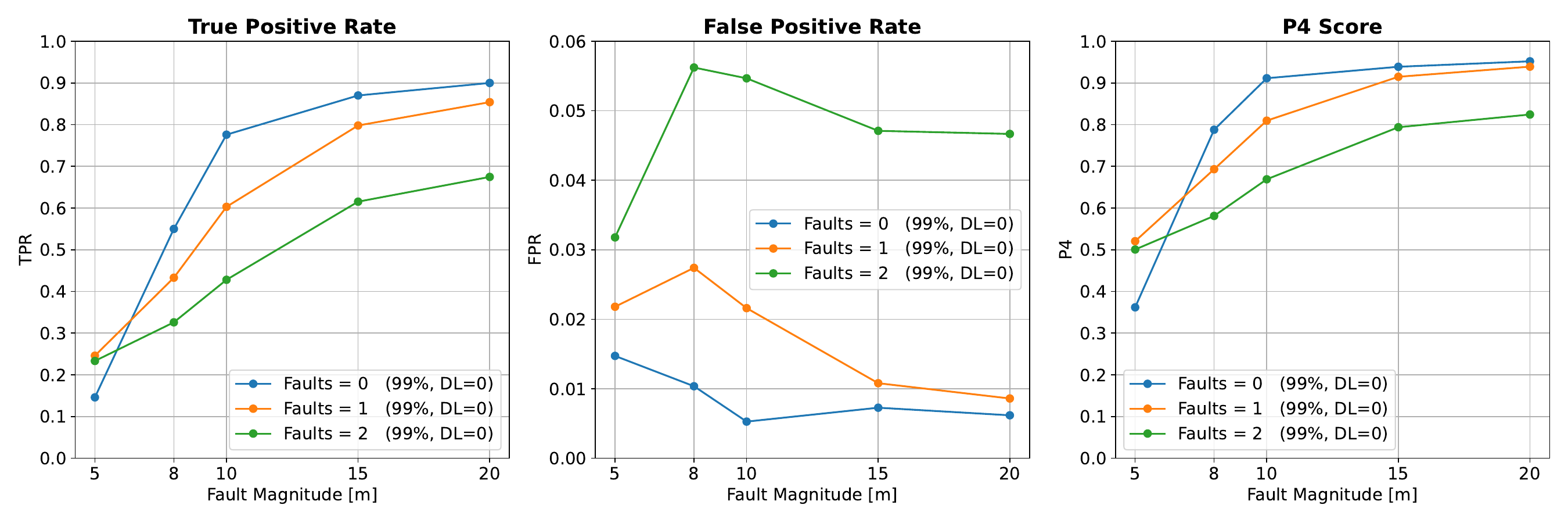}
    \caption{The detection results for different fault magnitudes and number of fault satellites. The test statistic threshold is fixed to $\teststatisticthreshold^{99}$, and the detection length is fixed to 1. In general, increasing the number of faults results in lower TPR and higher FPR. }
    \label{fig:comparison_nfaults}
\end{figure}


\section{Conclusion}
\label{sec:conclusion}
In this paper, we introduced a fault detection framework for autonomous satellite constellations using inter-satellite range (ISR) measurements. 
Our fault detection approach is based solely on the ISR measurements, and does not require the availability of precise ephemeris information. Compared to traditional fault monitoring methods that require ground station monitoring or precise ephemeris, our proposed method is suited for lunar environments where dedicated monitoring stations on the planetary surface would not be available.
 
In the paper, we start by providing mathematical proofs of the sufficient and necessary conditions of the graph topology required for fault detection. Next, we introduce several properties of the rank of the EDM and GCEDM. Based on these properties, we demonstrated that satellite fault can be detected by analyzing the singular values and the singular vectors of the geometric centered Euclidean Distance Matrix (GCEDM) derived from ISR measurements, using vertex-redundantly rigid graphs.
By using multiple vertex-redundantly rigid subgraphs found by a clique-finding algorithm to detect faults , the proposed fault detection framework can be used for dynamic geometries and graph topologies, enhancing the robustness and reliability of fault detection.
Through simulations of an ELFO constellation around the Moon, we demonstrated the effectiveness of our fault detection framework and organized the relationship between the environmental parameters, hyperparameters, and detection performance.

This research marks a step towards establishing reliable operation of satellite constellations in extraterrestrial environments, which is the key architecture to support future exploration missions on the Moon.
Future work will focus on developing a model that can predict the tails of the non-fault test statistics distribution from the ranges, selecting effective subgraphs to reduce computation, looking into distributed implementation, and combining the proposed algorithms with residual-based fault detection methods to further enhance the robustness of the algorithm.

\section*{acknowledgements}
We acknowledge Derek Knowles for reviewing the paper.
This material is based upon work supported by The Nakajima Foundation and the National Science Foundation under Grant No. DGE-1656518 and CNS-2006162. 

\bibliographystyle{apalike}
\bibliography{ref}

\begin{thebibliography}{}

\bibitem[Alawieh et~al., 2016]{Akawieh2016}
Alawieh, M., Hadaschik, N., Franke, N., and Mutschler, C. (2016).
\newblock Inter-satellite ranging in the low earth orbit.
\newblock {\em 2016 10th International Symposium on Communication Systems, Networks and Digital Signal Processing (CSNDSP)}, pages 1--6.

\bibitem[Alireza~Motevallian et~al., 2015]{alireza_motevallian_robustness_2015}
Alireza~Motevallian, S., Yu, C., and Anderson, B.~D. (2015).
\newblock On the robustness to multiple agent losses in {2D} and {3D} formations: {MULTIPLE} {AGENT} {LOSSES} {IN} {2D} {AND} {3D} {FORMATIONS}.
\newblock {\em International Journal of Robust and Nonlinear Control}, 25(11):1654--1687.

\bibitem[Chiba and Nishizeki, 1985]{Chiba1985}
Chiba, N. and Nishizeki, T. (1985).
\newblock Arboricity and subgraph listing algorithms.
\newblock {\em SIAM Journal on Computing}, 14(1):210--223.

\bibitem[Dokmanić et~al., 2015]{dokmanic2015euclidean}
Dokmanić, I., Parhizkar, R., Ranieri, J., and Vetterli, M. (2015).
\newblock Euclidean distance matrices: Essential theory, algorithms and applications.
\newblock {\em arXiv preprint}.

\bibitem[Hendrickson, 1992]{Hendrickson1992}
Hendrickson, B. (1992).
\newblock Conditions for unique graph realizations.
\newblock {\em SIAM Journal on Computing}, 21(1):65--84.

\bibitem[Iiyama et~al., 2024]{Iiyama2024Rigid}
Iiyama, K., Neamati, D., and Gao, G. (2024).
\newblock Autonomous constellation fault monitoring with inter-satellite links: A rigidity-based approach.
\newblock {\em Proceedings of the Institute of Navigation GNSS+ conference (ION GNSS+ 2024)}.

\bibitem[Israel et~al., 2020]{Israel2020}
Israel, D.~J., Mauldin, K.~D., Roberts, C.~J., Mitchell, J.~W., Pulkkinen, A.~A., Cooper, L. V.~D., Johnson, M.~A., Christe, S.~D., and Gramling, C.~J. (2020).
\newblock {LunaNet}: A flexible and extensible lunar exploration communications and navigation infrastructure.
\newblock {\em 2020 {{IEEE Aerospace Conference}}}, pages 1--14.

\bibitem[Jian et~al., 2010]{jian_beyond_2010}
Jian, L., Yang, Z., and Liu, Y. (2010).
\newblock Beyond triangle inequality: Sifting noisy and outlier distance measurements for localization.
\newblock {\em 2010 Proceedings IEEE INFOCOM}, pages 1--9.

\bibitem[Knowles and Gao, 2023]{Knowles2023}
Knowles, D. and Gao, G. (2023).
\newblock Euclidean distance matrix-based rapid fault detection and exclusion.
\newblock {\em NAVIGATION: Journal of the Institute of Navigation}, 70(1).

\bibitem[Knowles and Gao, 2024]{knowles2024greedy}
Knowles, D. and Gao, G. (2024).
\newblock Greedy detection and exclusion of multiple faults using euclidean distance matrices.
\newblock {\em arXiv preprint arXiv:2404.12617}.

\bibitem[Laman, 1970]{laman_graphs_1970}
Laman, G. (1970).
\newblock On graphs and rigidity of plane skeletal structures.
\newblock {\em Journal of Engineering Mathematics}, 4(4):331--340.

\bibitem[Li et~al., 2020]{Li2020OrderingHF}
Li, R., Gao, S., Qin, L., Wang, G., Yang, W., and Yu, J.~X. (2020).
\newblock Ordering heuristics for k-clique listing.
\newblock {\em Proceedings of the VLDB Endowment}, 13:2536 -- 2548.

\bibitem[MISRA et~al., 1993]{MISRA_1993}
MISRA, P., BAYLISS, E., LAFREY, R., PRATT, M., and MUCHNIK, R. (1993).
\newblock Receiver autonomous integrity monitoring (raim) of gps and glonass.
\newblock {\em NAVIGATION}, 40(1):87--104.

\bibitem[{National Aeronautics and Space Administration}, 2022]{nasa2022srd}
{National Aeronautics and Space Administration} (2022).
\newblock {Lunar communications relay and navigation systems~(LCRNS) Preliminary Lunar Relay Services Requirements Document~(SRD)}.

\bibitem[Rathje and Landsiedel, 2024]{rathje2024time}
Rathje, P. and Landsiedel, O. (2024).
\newblock Time difference of arrival extraction from two-way ranging.

\bibitem[Rodríguez-Pérez et~al., 2011]{rodriguez-perez_inter-satellite_2011}
Rodríguez-Pérez, I., García-Serrano, C., Catalán~Catalán, C., García, A.~M., Tavella, P., Galleani, L., and Amarillo, F. (2011).
\newblock Inter-satellite links for satellite autonomous integrity monitoring.
\newblock {\em Advances in Space Research}, 47(2):197--212.

\bibitem[Sitarz, 2023]{Sitarz_2023}
Sitarz, M. (2023).
\newblock Extending f1 metric, probabilistic approach.
\newblock {\em Advances in Artificial Intelligence and Machine Learning}, 03(02):1025–1038.

\bibitem[Van~Diggelen, 2009]{van2009gps}
Van~Diggelen, F. S.~T. (2009).
\newblock {\em A-gps: Assisted gps, gnss, and sbas}.
\newblock Artech house.

\bibitem[Walter et~al., 2018]{Walter2018}
Walter, T., Gunning, K., Eric~Phelts, R., and Blanch, J. (2018).
\newblock Validation of the unfaulted error bounds for araim.
\newblock {\em NAVIGATION}, 65(1):117--133.

\bibitem[Weiss et~al., 2010]{weiss2010board}
Weiss, M., Shome, P., and Beard, R. (2010).
\newblock On-board gps clock monitoring for signal integrity.
\newblock {\em Proceedings of the 42nd Annual Precise Time and Time Interval Systems and Applications Meeting}, pages 465--480.

\bibitem[Wolf, 2000]{wolf_onboard_2000}
Wolf, R. (2000).
\newblock Onboard autonomous integrity monitoring using intersatellite links.
\newblock {\em Proceedings of the 13th International Technical Meeting of the Satellite Division of The Institute of Navigation (ION GPS 2000)}.

\end{thebibliography}

\appendix

\section{Proof of Proposition 3.1 }
\label{sec:app1}

\begin{proof} 
    As discussed in \cite{dokmanic2015euclidean}, the noiseless- and faultess-EDM, $\EDM^{n, d, 0}_{ij} = \| \singlepoint_i - \singlepoint_j\|^2$, based on a collection of points $\PointMat \in \R^{d \times n}$ can be constructed with Equation~\eqref{eq:edmnoisless}.
    \begin{equation}
        \EDM^{n, d, 0} = \onesvec \, \diag(\PointMat^\top \PointMat)^\top - 2 \PointMat^\top \PointMat + \diag(\PointMat^\top \PointMat) \onesvec^\top \label{eq:edmnoisless}
    \end{equation}
    where $\diag(\PointMat^\top \PointMat) \in \R^n$ is the vector formed from the diagonal entries of $\PointMat^\top \PointMat$.
    The first and last matrices of Equation~\eqref{eq:edmnoisless} are rank 1 matrices by construction. 
    These rank 1 matrix each contribute to the matrix rank except for degenerate cases where the points are equally spread apart ($\PointMat \onesvec = \zeromat$). 
    The middle term includes the Gram matrix $\PointMat^\top \PointMat$, which is rank $d$ except for degenerate cases where the points lie on a lower dimensional hyperplane (i.e., $\matrank(\PointMat^\top \PointMat) = 2$ if the points lie on a plane).
    So, in line with \cite{dokmanic2015euclidean}, $\matrank(\EDM^{n, d, 0}) \leq d + 2$, where the condition holds with equality outside of the aforementioned degenerate cases. 

    During a fault, we encounter cross terms with the bias following Equation~\ref{eq:edmwithfault}.
    \begin{equation}
        \begin{aligned}
            \EDM^{n, d, m}_{ij} &= (\| \singlepoint_i - \singlepoint_j\| + \fault_{ij})^2 =  \| \singlepoint_i - \singlepoint_j\|^2 + 2\| \singlepoint_i - \singlepoint_j\| \fault_{ij} + \fault_{ij}^2 \\
            &= \| \singlepoint_i - \singlepoint_j\|^2 + \fault_{ij} \left( 2\| \singlepoint_i - \singlepoint_j\| + \fault_{ij} \right) \\
            &= \| \singlepoint_i - \singlepoint_j\|^2 + \fault_{ij} \left(\sqrtterm_{ij} + \fault_{ij} \right)
            \label{eq:edmwithfault}
        \end{aligned}
    \end{equation}
    Where $\sqrtterm_{ij}$ is two times the elementwise square root of the corresponding entry in the EDM. 
    As a matrix, we can write $\SqrtMatSuper_{ij} = s_{ij}$.
    We can write the bias term $\FaultMatSuper_{ij} = b_{ij}$ as a matrix as well.
    Both $\SqrtMatSuper$ and $\FaultMatSuper$ are real, symmetric matrices.  
    Then, we arrive at Equation~\eqref{eq:faultaddedmatrix}.
    \begin{equation}
        \EDM^{n, d, m} = \EDM^{n, d, 0} + \FaultMatSuper \hadamard (\SqrtMatSuper + \FaultMatSuper)
        \label{eq:faultaddedmatrix}
    \end{equation}
    where $\hadamard$ is the element-wise or Hadamard product. 
    For the sake of illustration, consider the case with three faults realized at satellites $k=\{2, 3, 4\}$ for $n > 7$, without loss of generality.
    \begin{equation}
        \FaultMat^{n, \{2, 3, 4\}} = \faultmag \begin{bmatrix}
            0 & 1 & 1 & 1 & 0 & \cdots & 0 \\
            1 & 0 & 2 & 2 & 1 & \cdots & 1 \\
            1 & 2 & 0 & 2 & 1 & \cdots & 1 \\
            1 & 2 & 2 & 0 & 1 & \cdots & 1 \\
            0 & 1 & 1 & 1 & 0 & \cdots & 0 \\
            \vdots & \vdots & \vdots & \vdots & \vdots & \ddots & \vdots \\
            0 & 1 & 1 & 1 & 0 & \cdots & 0
        \end{bmatrix} = \faultmag \begin{bmatrix}
            0 & 1 & 1 & 1 \\
            1 & 0 & 2 & 2 \\
            1 & 2 & 0 & 2 \\
            1 & 2 & 2 & 0 \\
            0 & 1 & 1 & 1 \\
            \vdots & \vdots & \vdots & \vdots \\
            0 & 1 & 1 & 1\\
        \end{bmatrix} \begin{bmatrix}
            1 & 0 & 0 & 0 & 1 & \cdots & 1 \\
            0 & 1 & 0 & 0 & 0 & \cdots & 0 \\
            0 & 0 & 1 & 0 & 0 & \cdots & 0 \\
            0 & 0 & 0 & 1 & 0 & \cdots & 0 \\
        \end{bmatrix}
        \label{eq:faultmatThree}
    \end{equation}
    In the rank-decomposition of $\FaultMatSuper$, the column matrix has the first column has a one in each fault entry and a zero otherwise.
    This column spans all the $n - m$ fault-free columns. 
    The remaining $m$ columns correspond to the columns in $\FaultMatSuper$ with faults and have a zero at the fault entry. 
    $\FaultMatSuper$ achieves full rank when the number of faults reaches $m = n-1$.  
    In general, $\matrank(\FaultMatSuper) = \min(1 + m, n)$ if $m > 0$ and $\matrank(\FaultMat^{n, 0}) = 0$.
    By similar argument, $\matrank(\FaultMatSuper \hadamard \FaultMatSuper) = \matrank(\FaultMatSuper)$ since we will have one column indicating the fault entries and $m$ columns associated with the columns of $\FaultMatSuper \hadamard \FaultMatSuper$ with faults, which are the same columns as $\FaultMatSuper$.
    This overall rank relationship often holds even if the faults are of different non-zero magnitudes.
    However, if the faults cancel out, the rank will drop.
    For example, in the case above, if $b_2 = 2 \faultmag$ and $b_3 = b_4 = - \faultmag$, then $\matrank(\FaultMatSuper) = 3$.
    Therefore, $\matrank(\FaultMatSuper) \leq \min(1 + m, n)$ for arbitrary fault sizes.
    Nevertheless, the term $\matrank(\FaultMatSuper \hadamard \FaultMatSuper)$ can still be rank $m + 1$ even if $\matrank(\FaultMatSuper) < m + 1$ since the squaring can remove the linear cancellation.
    
    However, the span of the columns of $\FaultMatSuper \hadamard \SqrtMatSuper$ will generally span the columns $\FaultMatSuper \hadamard \FaultMatSuper$. 
    Again, for the sake of illustration, we extend the example above with faults at satellites $k=\{2, 3, 4\}$ for $n > 7$, without loss of generality.
    \begin{equation}
    \begin{aligned}
        \FaultMat^{n, \{2, 3, 4\}} \hadamard \SqrtMatSuper &= \faultmag \begin{bmatrix}
            0      &   s_{12} &   s_{13} &   s_{14} & 0      & \cdots & 0 \\
            s_{12} & 0        & 2 s_{23} & 2 s_{24} & s_{25} & \cdots & s_{2n} \\
            s_{13} & 2 s_{23} & 0        & 2 s_{34} & s_{35} & \cdots & s_{3n} \\
            s_{14} & 2 s_{24} & 2 s_{34} & 0        & s_{45} & \cdots & s_{4n} \\
            0      &   s_{25} &   s_{35} &   s_{45} & 0      & \cdots & 0 \\
            \vdots & \vdots & \vdots & \vdots & \vdots & \ddots & \vdots \\
            0      &   s_{2n} &   s_{3n} &   s_{4n} & 0 & \cdots & 0
        \end{bmatrix} \\
        &= \faultmag \begin{bmatrix}
            0      &   s_{12} &   s_{13} &   s_{14} & 0      & 0 \\
            s_{12} & 0        & 2 s_{23} & 2 s_{24} & s_{25} & s_{26} \\
            s_{13} & 2 s_{23} & 0        & 2 s_{34} & s_{35} & s_{36} \\
            s_{14} & 2 s_{24} & 2 s_{34} & 0        & s_{45} & s_{46} \\
            0      &   s_{25} &   s_{35} &   s_{45} & 0      & 0 \\
            \vdots &   \vdots &   \vdots &   \vdots & \vdots & \vdots \\
            0      &   s_{2n} &   s_{3n} &   s_{4n} & 0      & 0 \\
        \end{bmatrix} \begin{bmatrix}
            1 & 0 & 0 & 0 & 0 & 0 & \rrefleftover_{17} & \cdots & \rrefleftover_{1n} \\
            0 & 1 & 0 & 0 & 0 & 0 & 0 & \cdots & 0 \\
            0 & 0 & 1 & 0 & 0 & 0 & 0 & \cdots & 0 \\
            0 & 0 & 0 & 1 & 0 & 0 & 0 & \cdots & 0 \\
            0 & 0 & 0 & 0 & 1 & 0 & \rrefleftover_{57} & \cdots & \rrefleftover_{5n} \\
            0 & 0 & 0 & 0 & 0 & 1 & \rrefleftover_{67} & \cdots & \rrefleftover_{6n} \\
        \end{bmatrix}
    \end{aligned}
        \label{eq:faultmatThreeLeftOver}
    \end{equation}
    where $\rrefleftover_{ij}$ are leftover terms from the row-reduction. 
    In the rank-decomposition of $\FaultMatSuper \hadamard \SqrtMatSuper$, $m$ columns correspond to the columns in $\FaultMatSuper$ with faults and have a zero at the fault entry, just as before.
    However, now, one column is generally not enough to span the fault-free columns since the entries in $\SqrtMatSuper$ are not linearly related.
    For example, $(s_{12}, s_{13}, s_{14})$ is generally not a linear scaling of $(s_{25}, s_{35}, s_{45})$.
    These fault-free columns constitute an $m$ dimensional subspace, from which will need $m$ columns to fully span.
    $\FaultMatSuper \hadamard \SqrtMatSuper$ achieves full rank when the number of faults reaches $m = \lceil n/2 \rceil$, where $\lceil \cdot \rceil$ is the ceiling function.
    At that point, the subspace of fault-free columns is large enough that the fault columns will contribute to the span.
    Therefore, $\matrank(\FaultMatSuper \hadamard \SqrtMatSuper) \leq \min(2m, n)$ where we do not have equality when $\SqrtMatSuper$ is degenerate (i.e., if the fault satellites are above or below a plane of fault-free satellites). 

    The columns needed to span $\FaultMatSuper \hadamard \SqrtMatSuper$ are the same as those needed for $\FaultMatSuper \hadamard \FaultMatSuper$, with the same sparsity pattern.
    Therefore, we will have
    \begin{equation}
        \matrank(\FaultMatSuper \hadamard (\SqrtMatSuper + \FaultMatSuper)) = \matrank(\FaultMatSuper \hadamard \SqrtMatSuper)  \leq \min(2m, n)
    \end{equation}
    Lastly, the subspace spanned with the fault contributions is distinct from the subspace spanned by the EDM.
    Therefore, 
    \begin{equation}
        \matrank(\EDM^{n, d, m}) \leq \min(d + 2 + 2m, n)
    \end{equation}
    where the equality holds in non-degenerate cases.

\end{proof}

\section{Proof of Proposition 3.2}
\label{sec:app2}

\begin{proof}
    First, by rank inequality $\matrank(AB) \leq \min(\matrank(A), \matrank(B))$ for arbitrary matrices $A$ and $B$. 
    The rank of the geometric centring matrix is $\matrank(J^n) = n - 1$.
    So, $\matrank(\GCEDM^{n,d,k}) \leq n - 1$.
    However, this bound is too loose when there are few faults.
    Expanding the expression for geometric centering yields Equation~\eqref{eq:geocenteringexpanded}.
    \begin{equation}
        \begin{aligned}
            \GCEDM^{n,d,m} &= -\frac{1}{2} J^n \EDM^{n, d, m} J^n \\
            &= -\frac{1}{2} J^n (\EDM^{n, d, 0} + \FaultMatSuper \hadamard (\SqrtMatSuper + \FaultMatSuper)) J^n \\
            &= -\frac{1}{2} J^n (\onesvec \, \diag(\PointMat^\top \PointMat)^\top - 2 \PointMat^\top \PointMat + \diag(\PointMat^\top \PointMat) \onesvec^\top) J^n - \frac{1}{2} J^n (\FaultMatSuper \hadamard (\SqrtMatSuper + \FaultMatSuper)) J^n
        \end{aligned} \label{eq:geocenteringexpanded}
    \end{equation}

    First, geometric centering removes the two rank 1 matrices, as shown in Equations~\eqref{eq:geocenterterm1} and \eqref{eq:geocenterterm3} \citep{dokmanic2015euclidean}.
    \begin{align}
        -\frac{1}{2} J^n (\onesvec \, \diag(\PointMat^\top \PointMat)^\top ) J^n &= -\frac{1}{2} (\onesvec \, \diag(\PointMat^\top \PointMat)^\top  - \onesvec \, \diag(\PointMat^\top \PointMat)^\top ) (\eye^n - \frac{1}{n} \onesvec {\onesvec}^{\top}) = 0 \label{eq:geocenterterm1} \\
        -\frac{1}{2} J^n (\diag(\PointMat^\top \PointMat) \onesvec^\top) J^n &= -\frac{1}{2} (\eye^n - \frac{1}{n} \onesvec {\onesvec}^{\top}) (\diag(\PointMat^\top \PointMat) \onesvec^\top  - \diag(\PointMat^\top \PointMat) \onesvec^\top) = 0 \label{eq:geocenterterm3}
    \end{align}
    For the $-2 \PointMat^\top \PointMat$ term, notice that the mean of the points is $\meanPoints = \frac{1}{n} X \onesvec \in \R^d$. Using this property yields Equation~\eqref{eq:geocenterterm2xTx}, which is a Gram matrix of the form $\PointMat_c^\top \PointMat_c$ for the point matrix $\PointMat_c$ centered about the origin \citep{dokmanic2015euclidean}.
    This shift will not change the rank, meaning $\matrank(\PointMat^\top \PointMat) = \matrank(\PointMat_c^\top \PointMat_c)$.
    \begin{equation}
        -\frac{1}{2} J^n (-2 \PointMat^\top \PointMat) J^n = (\PointMat - \meanPoints {\onesvec}^{\top})^\top (\PointMat - \meanPoints {\onesvec}^{\top}) = \PointMat_c^\top \PointMat_c \label{eq:geocenterterm2xTx}
    \end{equation}
    The remaining term is $\frac{1}{2} J^n (\FaultMatSuper \hadamard (\SqrtMatSuper + \FaultMatSuper)) J^n$. 
    By rank inequality, 
    \begin{equation}
        \matrank(J^n (\FaultMatSuper \hadamard (\SqrtMatSuper + \FaultMatSuper)) J^n) \leq \min(\matrank(\FaultMatSuper \hadamard (\SqrtMatSuper + \FaultMatSuper)), \matrank(J^n)) = \min(2m, n - 1)
    \end{equation}
    So, we are left with
    \begin{equation}
        \matrank(\GCEDM^{n,d,k}) \leq \min(d + 2m, n - 1) 
    \end{equation}

\end{proof}
\section{Proof of Proposition 3.3}
\label{sec:app3}
\begin{proof}
    In terms of rank, the noise acts as many small faults on each satellite, with the same sparsity structure as $\FaultMatSuper$ in Proposition~\ref{prop:edmrank}, almost surely with $m = n$ since there is zero probability mass that the randomly sampled noise is exactly zero or exactly cancel out.
    Using Proposition~\ref{prop:geocenteredm}, 
    \begin{equation}
        \matrank(\GCEDMnoisy^{n,d,m}) = \matrank(\GCEDM^{n,d,n}) \leq \min(d + 2n, n - 1) = n - 1
    \end{equation}
    Therefore, succinctly, $\matrank(\GCEDMnoisy^{n,d,m}) = n - 1$, almost surely.
\end{proof}

\end{document}